\newcommand{\nat}{\mathbb{N}}
\newcommand{\myI}{\mathcal{I}}
\newcommand{\pomLogic}{\mathcal{L}^{min}_{pOM}\xspace}
\newcommand{\conicHull}[1]{ch(#1)}
\newcommand{\XCone}{a}
\newcommand{\YCone}{b}
\newcommand{\oneg}[1]{{#1}^{\bot}}
\newcommand{\odneg}[1]{{#1}^{\bot\bot}}
\newcommand{\gbund}{\mathrel{\&}}
\newcommand{\gboder}{\mathrel{\vee}}
\newcommand{\gbnicht}{\mathop{\thicksim}}
\newcommand{\lmeet}{\wedge}
\newcommand{\ljoin}{\vee}
\newcommand{\lneg}[1]{{#1}^{\bot}}
\newcommand{\ltop}{\mathbb{1}}
\newcommand{\lbot}{\mathbb{0}}
\newcommand{\lunder}{\leq}
\newcommand{\lunderp}{<}
\newcommand{\lcover}{\mathrel{<:}}
\newcommand{\cund}{\sqcap}
\newcommand{\coder}{\sqcup}
\newcommand{\cnicht}[1]{{#1}^{\circ}}
\newcommand{\cunder}{\sqsubseteq}
\newcommand{\ctop}{\raisebox{-.075em}{\rotatebox[origin=c]{270}{$\vDash$}}}
\newcommand{\cbot}{\raisebox{-.075em}{\rotatebox[origin=c]{90}{$\vDash$}}}
\newcommand{\aCone}{a}
\newcommand{\bCone}{b}
\newcommand{\cCone}{c}
\newcommand{\minC}{$\mathit{MC}_{8}$\xspace}
\newcommand{\cmtp}{(pOM)\xspace}
\newcommand{\wllj}{(wLLJ)\xspace}
\newcommand{\raus}[1]{}
\newcommand{\AteilB}{(PM1)\xspace}
\newcommand{\nichtABunterC}{(PM2)\xspace}
\newcommand{\BCACC}{(PM3)\xspace}
\newcommand{\ABconclusion}{(C)\xspace}
\newenvironment{proofsketch}%
{
\noindent{\emph{Proof sketch.}} 
}%
{\hfill \qed\vspace{1ex}}
\begin{document}
\title{Orthologics for Cones}
\author{
	Mena Leemhuis\inst{1}\orcidID{0000-0003-1017-8921}
	\and
	{\"O}zg\"ur L.\ {{\"O}z{\c{c}}ep}\inst{1}
	\orcidID{0000-0001-7140-2574} 
\and
	Diedrich Wolter\inst{2}
	\orcidID{0000-0001-9185-0147}}


\institute{University of L\"ubeck, L\"ubeck, Germany\\ \email{mena.leemhuis@student.uni-luebeck.de \\ oezcep@ifis.uni-luebeck.de} \and University of Bamberg, Bamberg, Germany\\
	\email{diedrich.wolter@uni-bamberg.de}}

\maketitle    

\begin{abstract}
In applications that use knowledge representation (KR) techniques, in particular those that combine data-driven and logic methods, the domain of objects is not an abstract unstructured domain, but it exhibits a dedicated, deep structure of geometric objects.
One example is the class of convex sets used to model natural concepts in conceptual spaces, which also links via convex optimisation techniques to machine learning.
In this paper we study logics for such geometric structures. 
Using the machinery of lattice theory, we describe an extension of minimal orthologic with a partial modularity rule that holds for closed convex cones.
This logic combines a feasible data structure (exploiting convexity/conicity) with sufficient expressivity, including full orthonegation (exploiting conicity).
\keywords{orthonegation \and geometric model \and	knowledge graph embedding \and 	orthomodularity}
\end{abstract}

\section{Introduction\label{sect:introduction}}


In many applications of artificial intelligence (AI) the domain of objects has a dedicated, deep mathematical structure, which, usually,  is of a geometric kind.   
One example is the class of convex sets used to model natural concepts in conceptual spaces \cite{gaerdenfors00conceptual}.  
 Another related example, which is of relevance both for the knowledge representation (KR) and machine learning (ML) community, is that of knowledge graph embeddings \cite{wang17knowledgeEmbedding} where concepts and relations are  viewed as geometric objects or functions on a continuous space.  
A different example of KR-relevant structures consisting of geometric objects is that of closed subspaces in a Hilbert space. Closed subspaces can be used to model partial information \cite{hartonas16reasoning} regarding the states and measurements of particles on the micro-level \cite{birkhoff37logic}. 

In any of these and related examples,  qualitative reasoning over the geometric structures requires identifying a logic that captures the properties of the geometric objects.  In case of knowledge graph embeddings, say, the approach of  \cite{gutierrez-basulto18from} describes a class of logics (existential Datalog  fulfilling the quasi-chainedness property) that fits to geometric models of a specific kind,  namely those in which arbitrary relations are interpreted (again) as convex regions in Euclidean space. More concretely, the authors show that any logically consistent ontology over the specified datalog fragment has  a convex-region based geometric model. 
In the case of  closed subspace Hilbert spaces the logics that have been considered to be appropriate are quantum logics \cite{birkhoff37logic}.
 
Using the machinery of lattice theory \cite{graetzer11lattice}, this paper proposes the  propositional logic $\pomLogic$ to capture  useful properties of the class of closed convex cones. 
The class of closed convex cones should be of interest for various KR areas as it provides a good balance between computational feasibility (convexity) and 
expressivity (conicity). Indeed, convexity has been identified  as a useful property not only from a cognitive-linguistical perspective \cite{gaerdenfors00conceptual} but also from the viewpoint of computational feasibility---a case in point being the field of convex optimisation \cite{boyd04convex}.  
Conicity proves to be useful as it allows us to define a notion of negation in form of the polarity operation. This form of negation goes beyond the negation of 
\cite{gutierrez-basulto18from} and \cite{kulmanov19ele}, where negation is atomic and can hence  (only) represent integrity constraints and disjointness, but not covering constraints. 

Our proposed  logic $\pomLogic$ is an extension of minimal orthologic \cite{goldblatt74semantic}, a propositional non-distributive logic  with orthonegation, i.e., a negation that fulfils  antitonicity (contraposition), the intuitionistic absurdity principle (anything follows from a sentence stating $A$ and its negation) and double negation elimination. 
Such orthologics are closely related to ortholattices (the latter resulting from the former by the Lindenbaum-Tarski-construction). The proposed logic $\pomLogic$ contains an additional rule, called \cmtp,  that generalises the  orthomodularity property of minimal quantum logic. 
 
In earlier publications \cite{oezcep20cone,leemhuis20multi-label} we applied the idea of using cones for embedding knowledge graphs with background knowledge expressed in the semi-expressive description logic $\mathcal{ALC}$  and showed how to use cones for typical ML problem such as multi-label learning \cite{gibaja14multilabel}.        
In those publications we assumed the logic to be given in advance and as being distributive. This lead to severe restrictions on the overall configuration of cones---requiring them to be axis-aligned. 
As a consequence, a higher-dimensional space was required in models than would be necessary with arbitrarily positioned cones.

  In this paper, we drop the distributivity assumption and  investigate the non-distributive ``natural'' logics that hold for arbitrary configurations of cones. 

The  contributions of this  paper are the following: a theorem showing that closed convex cones fulfil the partial orthomodularity rule \cmtp;  a forbidden-subalgebra theorem showing that any ortholattice not fulfilling \cmtp must contain a particular minimal subortholattice, and a representation theorem characterising any logic fulfilling this additional rule. 

The rest of the paper is structured as follows: In Sect.\ \ref{sect:preliminaries} we set up the basic  lattice-theoretic, geometric, and logical machinery. 
In Section \ref{sect:coneLogic} we first give cone counterexamples to many prominent rules  discussed as potential weakenings of distributivity and then introduce the partial orthomodularity rule \cmtp and then prove, for the induced logic $\pomLogic$, the three results mentioned above.  
 Section~\ref{sect:relatedWork} discusses related work. 
 We finish by drawing conclusions.


\section{Preliminaries}\label{sect:preliminaries}
In this section  we give the main basic notions and results from lattice theory (Sect.~\ref{subsec:latticePreliminaries}), relevant geometrical notions in the context of cones (Sect.~\ref{subsec:latticePreliminaries}), and  the main bits of orthologics (Sect.~\ref{subsec:orthologicsPreliminaries}) as developed by Goldblatt \cite{goldblatt74semantic}. 

\subsection{Lattices\label{subsec:latticePreliminaries}}
\raus{A \emph{lattice} $(L, \lunder)$ is a structure with domain $L$ and a partial order $\lunder$ such that for any pair of elements $a,b \in L$ there is a smallest upper bound denoted $a \ljoin b$ and a largest lower bound denoted $a \lmeet b$.  As usual, $x \lunderp y$ means $x \lunder y$ and not $ y \lunder x$.  A \emph{bounded lattice}  $(L, \leq)$  contains a smallest element $\lbot$ and a largest element $\ltop$, i.e., elements such that for all $x \in L$ one has $ \lbot \lunder x \lunder \ltop $. An element $b$ covers an element $b$, for short $a \lcover b$ iff  for all $c$ with $a \leq c \leq b$ either $c = a$ or $c  = b$.} 

  
A lattice is called \emph{distributive} iff  for all $a,b,c \in L$: $a \lmeet (b \ljoin c) = (a \lmeet  b) \ljoin (a \lmeet  c)$ (and dually: $a \ljoin (b \lmeet  c) = (a \ljoin b) \lmeet  (a \ljoin c)$). 

The binary \emph{modularity relation}  on a lattice  \cite{fofanovaSemi-modular}, $(L,\leq$) is defined for all $a,b \in L$ as follows: $M(a,b) :\Leftrightarrow \forall c \leq a: a \wedge (b \vee c) = (a \wedge b) \vee c$.   
A pair $(a,b)$ is said to be a \emph{modular pair} iff $M(a,b)$ holds.  
 A lattice is called \emph{modular} iff $M(a,b)$ for all $a,b$. 
The modularity relation generalises the property of distributivity. 
 
  


 A lattice is called \emph{M-symmetric}  
 iff the modularity relation is symmetric, i.e.,  iff $M(a,b)$ entails $M(b,a)$.  
 
In a lattice an element $a^*$ is called a \emph{complement of} $a$ iff $a \lmeet a^* = \lbot$ and $a \ljoin a^* = \ltop$.  A lattice is said to be \emph{complemented (uniquely complemented)} iff each $a$ has a complement (has exactly one complement).   A bounded lattice $L$ is called an \emph{ortholattice} iff it has an orthocomplement $\oneg{\cdot}$, i.e., a function such that  for all $a,b \in L$ the following three conditions hold:   
\begin{itemize}
\item $a \lunder b$ entails $\oneg{b} \lunder \oneg{a}$ \hfill (antitonicity)
\item $\odneg{a} = a$ \hfill (double negation elimination)
\item $\lbot = a \lmeet \oneg{a}$ \hfill (intuitionistic absurdity)
\end{itemize}
Any ortholattice satisfies de Morgan's laws, i.e.,  for any $a,b \in L$ it holds that $\oneg{(a \lmeet b)} = \oneg{a} \ljoin \oneg{b}$    (and dually:  $\oneg{(a \ljoin b)} = \oneg{a} \lmeet \oneg{b}$). 

\raus{Roughly,  ortholattices can be understood as Boolean algebras without the distributivity rule. But there is a more fine-grained characterisation by MacNeille \cite{macneille37partially}.
 This characterisation  
states that  an algebraic structure is a Boolean algebra iff it is an ortholattice and fulfils the following additional axiom:
\begin{description}
\item[(*)] For all $a,b$: If for all $c$: $a \wedge b \leq c$, then $a \leq \oneg{b}$.
\end{description}  
The characterisation is such that it dispenses with the special elements $\ltop$ and $\lbot$. But  if the smallest element $\lbot$ is allowed, then this axiom can be expressed in the following form:  
\begin{description}\label{weakLLJ}
\item[\wllj] For all $a,b$:  If $a \wedge b \leq \lbot$, then $a \leq \oneg{b}$.
\end{description}  
 We call this rule \emph{weak Johansson's constructive contraposition} as it is a special case of Johansson's constructive contraposition named (LLJ) in \cite{hartonas16reasoning}.
 \begin{description}
\item[(LLJ)] For all $a,b,c$:  If $a \wedge b \leq c$, then $a \lmeet \oneg{c} \leq \oneg{b}$. 
\end{description}  
 When setting $c = \lbot$ in (LLJ) one immediateley gets \wllj.} 

An ortholattice is called \emph{orthomodular} iff one of the following equivalent conditions \cite[pp.\ 35--36]{redei98quantum} of orthomodularity holds: 
\begin{description}
\item[(OMr)] If $a \lunder b$ and $\oneg{a} \lunder c$ then $a \ljoin (b \lmeet c) = (a \ljoin b) \lmeet (a \ljoin c)$.\\
\mbox{\quad} \hfill (orthomodularity)
\item[(sOMr)]  If $a \lunder b$ then $b = a \ljoin (\oneg{a} \lmeet b)$.\\ 
\mbox{\quad} \hfill(short form orthomodularity)
\item[(dsOMr)]  If $b \lunder a$ then $b = a \lmeet (\oneg{a} \ljoin b)$.\\
\mbox{\quad} \hfill (dual short form of orthomodularity)
\end{description}
Clearly any modular ortholattice is also orthomodular.  

 
%
%
 \begin{figure}
\centerline{
\begin{tikzpicture}[scale=0.4]
\node (top) at (3,6) {$\ltop$};
\node (a) at (1,4) {$a$} edge (top);
\node (c) at (1,2) {$b$} edge (a);
\node (b) at (5,4) {$\oneg{b}$} edge (top);
\node (d) at (5,2) {$\oneg{a}$} edge (b);
\node (bot) at (3,0) {$\lbot$} edge (c) edge (d);
\end{tikzpicture}
\hfill
\begin{tikzpicture}[scale=0.7]
  \node[inner sep=0pt,outer sep=0pt] at (0,0) (ursprung){};
\draw (ursprung.45) -- (45:2);
\draw (ursprung.135) -- (135:2);
\fill[black!20!white]  (ursprung.45) -- (45:2)--(135:2)--cycle;
\draw node at (-2,1){$\aCone$};  
\draw (ursprung.60)--(60:1.6cm);
\draw (ursprung.120)--(120:1.6cm);
\fill[black!10!white] (ursprung.60)--(60:1.6cm)--(120:1.6cm)--cycle;
\draw node at (0,1){$\bCone$};  
\draw node at (-2,-0.5){$\bCone^{\circ}$};  
\fill[black!10!white] (ursprung.330)--(330:2.8cm)--(210:2.8cm)--cycle;
\fill[black!20!white] (ursprung.315) -- (315:2)--(225:2)--cycle;
\draw node at (0,-1){$\aCone^{\circ}$};  
\draw (ursprung.330) --(330:2.8cm);
\draw (ursprung.210) --(210:2.8cm);
\draw (ursprung.315) -- (315:2);
\draw (ursprung.225) -- (225:2);
\end{tikzpicture}
\hfill
		\begin{tikzpicture}[xscale=0.35, yscale = 0.35]
		\node (top) at (3,10) {$\ltop$};
		\node (C) at (0,7) {$c$} edge (top); 
		\node (nB) at (3,8) {$\oneg{b}$} edge (top); 
		\node (A) at (6,7) {$a$} edge (top); 
		\node (nX) at (3,6) {$\oneg{d}$} edge (nB); 
		\node (nA) at (0,3) {$\oneg{a}$} edge (nX) edge (C); 
		\node (X) at (3,4) {$d$} edge (C) edge (A); 
		\node (nC) at (6,3) {$\oneg{c}$} edge (nX) edge (A); 
		\node (B) at (3,2) {$b$} edge (X); 
		\node (bot) at (3,0) {$\lbot$} edge (nA) edge (B) edge (nC);    
		\end{tikzpicture}
}
\caption{Left:Ortholattice $O_{6}$; middle: Cone embedding of $O_{6}$; right: lattice \minC  
\label{fig:hexagonAndItsConeEmbedding}
}
\end{figure}
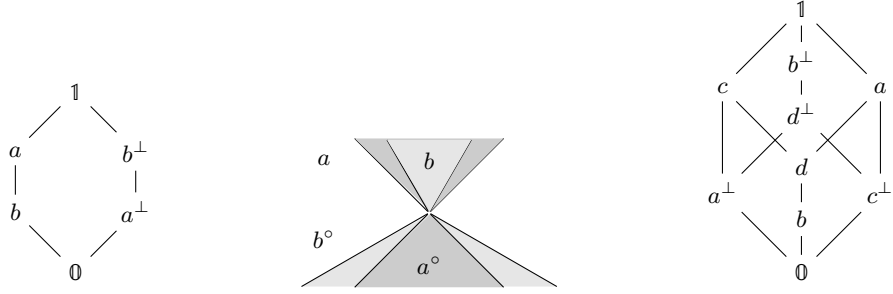

In lattice theory there are interesting characterisations of lattices fulfilling some given property/rule/axiom by so-called \emph{forbidden-subalgebra theorems} \cite[p.\ 134]{padmanabhan08axioms}. 
Consider the  Chinese lantern/hexagon lattice on the left hand side  of Fig.\ \ref{fig:hexagonAndItsConeEmbedding}, also known  under $O_{6}$  in the literature \cite{padmanabhan08axioms}. It is easy to check that the hexagon is indeed a lattice and orthocomplemented.
On the other hand this lattice is not orthomodular according to (dsOMr):  we have $b \lunder a$, but also   $b \neq a \lmeet (\oneg{a} \ljoin b) = a \lmeet \ltop = a$.   
 Now the forbidden-subalgebra theorem for ortholattices $L$ \cite[p.\ 134]{padmanabhan08axioms}  states that  $L$ is orthomodular iff it excludes $O_{6}$ as a sublattice. 
 In Sect.\ \ref{sect:coneLogic} we state a similar theorem for a generalisation of the orthomodularity rule. The structure used for this forbidden subalgebra theorem is the right figure in Fig.\ \ref{fig:hexagonAndItsConeEmbedding}.

\subsection{Cones \label{subsec:latticePreliminaries}}
We are going to consider geometric objects in finite dimensional Euclidean spaces $\mathbb{R}^n$ which are equipped  with a dot product $\langle \cdot, \cdot \rangle$. For any $x = (x_{1}, \dots, x_{n})$, $y = (y_{1}, \dots, y_{n}) \in \mathbb{R}^n$ the \emph{dot product} is defined as $\langle x,y \rangle = \Sigma_{1 \leq i \leq n} x_{i} \cdot y_{i}$. 
\raus{The dot product induces a norm for vectors $||x|| = \langle x, x\rangle$, and hence a metric $d(x,y) = ||x -y||$. Based on a metric $d$ one can define the open balls  $B_{\epsilon,x} = \{y \mid d(x,y) < \epsilon\}$, which in turn lead to the notion of \emph{open} sets $O$ (for each $x \in O$ there is a ball $B(\epsilon,x) \subseteq O$ contained in $O$)  and \emph{closed} sets (= complements of the open sets). }

A \emph{convex cone} $\XCone$ is a set such that from $x,y \in \XCone$ it follows that $\lambda x + \mu y \in \XCone$ for any $\lambda, \mu \in \mathbb{R}_{\geq 0}$.  We consider  convex cones that are closed in the canonical topology of $\mathbb{R}^n$ as defined above. 
One of the nice properties of closed convex cones is that they allow a polarity operation that takes the role of an orthocomplent. 
The \emph{polar cone} $\XCone^{\circ}$ for $\XCone$ is defined for Euclidean spaces with a dot product $\langle \cdot, \cdot \rangle$ as follows:  
\[\XCone^{\circ} = \{ x \in \mathbb{R}^n \mid \forall y \in \XCone: \langle x,y\rangle \leq 0 \}  \] 
Interpreting the dot product as a similarity measurement (as done in many ML scenarios)  the definition says that the polar cone contains those objects in the whole space that are not properly similar to any of the objects in $\XCone$.  

Now consider the subset-relation $\cunder = \subseteq$ 
on closed convex cones in $\mathbb{R}^n$ as a partial order. Closed convex cones are closed under  set intersection, so $\cap$ is a meet operator $\lmeet$ w.r.t.\ $\lunder$. Closed convex cones are not closed under set union. Instead they have to be closed up by the conic hull operator. The \emph{conic hull} of a set $\YCone$, for short $\conicHull{\YCone}$, is the smallest convex cone containing $\YCone$. So, we can define  join operation $\ljoin$  by $\XCone \ljoin \YCone = \conicHull{\XCone \cup \YCone}$. Considering $\mathbb{R}^n$ as the largest lattice element $\ltop$ and $\{0\}$, for $0 \in \mathbb{R}^n$, as the smallest lattice element $\lbot$ makes the resulting structure a bounded lattice.  

Using its definition, one can show that the polarity operator for  closed convex cones fulfils the properties of an orthocomplement. Hence the set of all closed convex cones (over $\mathbb{R}^n$) forms an ortholattice. As de Morgan's laws hold in any ortholattice, one gets in particular the following characterisation of the conic hull:   $\conicHull{\XCone \cup \YCone} = (\XCone^{\circ} \cap \YCone^{\circ})^{\circ}$. 
We denote the set of all closed convex cones in $\mathbb{R}^n$ by $\mathcal{C}_{n}$. 
\begin{proposition} 
For any $n \geq 1$, $\mathcal{C}_{n}$ is  an ortholattice. 
\end{proposition}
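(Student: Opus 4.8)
The plan is to verify the three defining conditions of an ortholattice for $(\mathcal{C}_n, \subseteq)$ with meet $\cap$, join $\mathrm{ch}(\cdot \cup \cdot)$, bottom $\{0\}$, top $\mathbb{R}^n$, and orthocomplement given by the polar cone $(\cdot)^\circ$. Before that, I would briefly justify the lattice structure itself: closed convex cones are closed under arbitrary intersections (each defining half-space-type condition is preserved, and closedness/convexity/conicity are all intersection-stable), so $a \cap b$ is again a closed convex cone and is clearly the greatest lower bound w.r.t.\ $\subseteq$; since $\mathbb{R}^n$ and $\{0\}$ are closed convex cones, the structure is bounded; and $\mathrm{ch}(a \cup b)$ — the intersection of all closed convex cones containing $a \cup b$, which is nonempty since $\mathbb{R}^n$ is one — is the least such cone, hence the least upper bound. (One should be slightly careful that the conic hull of the union of two \emph{closed} cones, taken as the \emph{closed} conic hull, is indeed the join in $\mathcal{C}_n$; since we define $\mathrm{ch}$ as the smallest closed convex cone containing the set, this is immediate.)

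First I would establish antitonicity: if $a \subseteq b$ then every $x$ with $\langle x, y\rangle \le 0$ for all $y \in b$ in particular satisfies this for all $y \in a$, so $b^\circ \subseteq a^\circ$. Second, double negation elimination $a^{\circ\circ} = a$: the inclusion $a \subseteq a^{\circ\circ}$ is immediate from the definition (every $y \in a$ has $\langle x, y \rangle \le 0$ for all $x \in a^\circ$), and the reverse inclusion $a^{\circ\circ} \subseteq a$ is exactly the bipolar theorem for closed convex cones, which I would invoke as a standard fact (it rests on the separating hyperplane theorem: if $z \notin a$, a closed convex set, strictly separate $z$ from $a$ by a hyperplane, and using conicity of $a$ and $0 \in a$ the separating functional can be normalized to lie in $a^\circ$ while being strictly positive on $z$, so $z \notin a^{\circ\circ}$). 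Third, intuitionistic absurdity $a \cap a^\circ = \{0\}$: if $x \in a \cap a^\circ$ then $\langle x, x \rangle \le 0$, forcing $x = 0$; conversely $0$ lies in every cone.

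The main obstacle — the only non-routine ingredient — is the reverse inclusion $a^{\circ\circ} \subseteq a$ in double negation elimination, i.e.\ the bipolar theorem, which genuinely uses both closedness (for separation) and conicity (to turn the separating affine functional into an element of the polar cone); the other conditions are one-line verifications. I would therefore structure the proof as: (i) recall why the underlying structure is a bounded lattice; (ii) dispatch antitonicity and intuitionistic absurdity directly; (iii) prove double negation elimination, citing the separating hyperplane / bipolar theorem for the hard direction. Since all three orthocomplement axioms then hold, $\mathcal{C}_n$ is an ortholattice, and (as already noted in the text) de Morgan's laws come for free, which retroactively justifies the stated formula $\mathrm{ch}(a \cup b) = (a^\circ \cap b^\circ)^\circ$.
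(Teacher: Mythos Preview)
Your proposal is correct. The paper itself does not give a proof of this proposition: it states, just before the proposition, that ``using its definition, one can show that the polarity operator for closed convex cones fulfils the properties of an orthocomplement,'' and leaves it at that. So there is no argument in the paper to compare against; your write-up supplies exactly the verification the paper omits. Your identification of the bipolar theorem (via hyperplane separation, using closedness and conicity) as the only non-trivial ingredient for $a^{\circ\circ}=a$ is accurate, and the remaining checks (bounded lattice structure, antitonicity, $a\cap a^{\circ}=\{0\}$) are indeed routine as you say.
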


We use dedicated symbols for the signature of ortholattices when we talk about the ortholattice of closed convex cones:  $\cunder = \subseteq$ stands for the lattice order $\lunder$, $\cund = \cap$ stands for lattice meet $\lmeet$,      $\coder$ stands for lattice join $\ljoin$, and $\cnicht{}$ stands for orthocomplement $\lneg{}$, $\ctop = \mathbb{R}^n$ stands for the largest element $\ltop$ and $\cbot = \{0\}$ stands for the smallest element $\lbot$.\footnote{For disambiguation in this paper  we rely on symbols $\ctop$ and $\cbot$ which may be used with a different semantics in other papers.}

\subsection{Orthologics\label{subsec:orthologicsPreliminaries}}
   In this paper we are going to investigate  the class of all structures of closed convex cones over $\mathbb{R}^n$ (for any $n$) that make up an ortholattice. For this class we can expect to find interesting properties that can be  inferred in a calculus over some simple (non-distributive)  propositional logic. The reason is that this class is closed w.r.t.\ subortholattices and as such is almost a quasivariety \cite[p.\ 421]{graetzer11lattice} which can be described by implications of lattice identities.  To do so we rely on the general framework of  orthologics \cite{goldblatt74semantic}.  
   
 Let $P = \{P_{i}\mid i \in \nat\}$ be a set of proposition symbols and assume that we have logical symbols for binary conjunction $\gbund$, unary  negation $\gbnicht$, and binary disjunction $\gboder$. The set of propositional formulae $Fml(P)$ over $P$ is defined as usual. $A,B,C$ stand for propositional formulae in $Fml(P)$.  
 
 We consider  natural deduction calculi with a derivability relation $\vdash$.  We use the short notation $A \dashv \vdash B$ for  $A \vdash B$ and $B \vdash A$.  Moreover, for a finite set of formulae $ \Gamma = \{B_{1}, \dots, B_{n}\}$ the notation $\Gamma \vdash A$ is a shorthand for $B_{1} \gbund \dots \gbund B_{n} \vdash A$. 
The calculus of \emph{minimal orthologic} $Omin$ according to Goldblatt \cite{goldblatt74semantic} is given in Fig.\ \ref{fig:orthologic}.

\begin{figure}

\begin{tikzpicture}[xscale=0.5, yscale = 0.7]

\node[anchor= south west] at (0,5.2){Axioms};
\node[anchor= south west] at (11,5.2){Rules};
\node[anchor=north west] at (0,5){
\begin{tabular}{l}
$A \vdash A$  \quad $A \gbund B \vdash A$  \quad
 $A \gbund B \vdash B$\\ 
$A \dashv \vdash \gbnicht \gbnicht A$ \quad 
$A \gbund \gbnicht A \vdash B$\\
 $A \gboder B  \dashv \vdash \gbnicht ( \gbnicht A \gbund \gbnicht B)$
\end{tabular}
};
\node[anchor=north west] at (11,5){
\begin{tabular}{c}
$A \vdash B$,  $B \vdash C$\\ \hline
$A \vdash C$
\end{tabular}
};
\node[anchor=north west] at (16,5){
\begin{tabular}{c}
$A \vdash B$,  $A \vdash C$\\ \hline
$A \vdash B \gbund C$
\end{tabular}
};
\node[anchor=north west] at (21,5){
\begin{tabular}{c} $A \vdash B$\\ \hline
  $\gbnicht B \vdash \gbnicht A$
\end{tabular}
};

\end{tikzpicture}


\caption{Minimal Orthologic $Omin$}
\label{fig:orthologic}
\end{figure}

Any logic $\mathcal{L}$ containing the  rules of Fig.\ \ref{fig:orthologic} is called an \emph{orthologic}. 
For any orthologic the well-known Lindenbaum-Tarski construction leads to an ortholattice:  
The binary relation $\dashv \vdash$ can be shown to be an equivalence relation inducing for each formula $C$ an equivalence class $[C]$. Define operations $\lmeet$, $\ljoin$, $\oneg{}$ on the equivalence classes by  setting $ [C] \lmeet [D] = [C \gbund D]$, $  [C] \ljoin [D] = [C \gboder D]$ and $\oneg{[C]} = [\gbnicht C]$. These yield an ortholattice.

Goldblatt \cite{goldblatt74semantic} defines the semantics of orthologics 
based  on a structure $(X, \bot)$ called an \emph{orthoframe}. It consists of a  domain/carrier  $X$ and a binary \emph{orthogonality relation} $\bot \subseteq X \times X$, i.e., a relation that is  irreflexive and symmetric.\footnote{Note that in our considerations with the dot product $\langle \cdot, \cdot \rangle$ we have only an ``almost'' irreflexive orthogonality relation $x \bot y $ defined by  $\langle x,  y \rangle \leq 0$. But this is not a problem as the results of Goldblatt also hold for this almost reflexive orthogonality relation. }  
An orthoframe induces an operation ${(\cdot)}^*$ over subsets $Y \subseteq X$  defined by $Y^* = \{x \in X \mid x \bot Y\} =  \{x \in X \mid x \bot y \text{ for all } y \in Y \}$. Observe  the correspondence to polarity of cones. 
A set $Y \subseteq X$ is called \emph{$\bot$-closed} iff $Y = Y^{**}$. This says that if $x \notin Y$ then there is a $z$ such that  not $x \bot z$, and for all $y \in Y$:  $z \bot y$.

An \emph{orthomodel} for a logic $\mathcal{L}$ over $Fml(P)$ is defined as a structure  $\myI = (X, \bot, (\cdot)^{\myI})$ such that $(X, \bot)$ is an orthoframe and $(\cdot)^{\myI}$ assigns to each $P_{i} \in P$ a $\bot$-closed set over $X$. In a natural way one can extend the assignment function to arbitrary formulae $(A \gbund B)^{\myI} = (A)^{\myI} \cap (B)^{\myI}$, $(\gbnicht A)^{\myI} = ((A)^{\myI})^*$. ($\gboder$ is treated by de Morgan's law).  
The \emph{semantical entailment relation $\vDash$}  then can be defined as  $\myI: A \vDash B$ iff $(A)^{\myI} \subseteq (B)^{\myI}$.  If $U$ is a class of orthoframes, then $A \vDash_{U} B$ means that $\myI: A \vDash B$ for any orthomodel definable in any orthoframe in $U$. 

 In establishing correctness and completeness results w.r.t.\ the orthomodel semantics,  Goldblatt \cite{goldblatt74semantic} constructs for each orthologic $\mathcal{L}$ a canonical model. In order to define this canonical model, he considers maximally consistent sets called $\mathcal{L}$-full sets:  a set $Y$ of formulae is said to be \emph{$\mathcal{L}$-full} iff it is closed w.r.t.\ $\mathcal{L}$-derivability and w.r.t.\  conjunction $\gbund$ and is consistent (i.e., $Y$ is different from the set of all formulae  $Fml(P)$).  The canonical orthomodel is defined as $\myI_{\mathcal{L}} = (X_{\mathcal{L}}, \bot_{\mathcal{L}}, (\cdot)^{\myI_{\mathcal{L}}})$ where $X_{\mathcal{L}}$ consists of all $\mathcal{L}$-full sets and where $X\bot_{\mathcal{L}}Y$ holds iff there is some formula $A$ with $A \in X$ and  $\gbnicht A \in Y$, and where the assignment function $(\cdot)^{\myI_{\mathcal{L}}}$ is defined by $(P_{i})^{\myI_{\mathcal{L}}} = \{Y \in X_{\mathcal{L}} \mid P_{i} \in Y\}$.   
 Goldblatt establishes the following fact:
 
 \begin{proposition}[\cite{goldblatt74semantic}] \label{prop:canonicOrhtoModelCaptures} For any orthologic $\mathcal{L}$: 
 $\Gamma \vdash_{\mathcal{L}} A$ iff $\myI_{\mathcal{L}}: \Gamma \models A$. 
 \end{proposition}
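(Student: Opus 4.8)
\begin{proofsketch}
The plan is the classical canonical-model argument: reduce both directions to a \emph{Truth Lemma} stating that for every formula $A$ and every $\mathcal{L}$-full set $Y$, $Y \in (A)^{\myI_{\mathcal{L}}}$ iff $A \in Y$; equivalently $(A)^{\myI_{\mathcal{L}}} = \{Y \in X_{\mathcal{L}} \mid A \in Y\}$. Granting this, soundness is immediate: if $\Gamma=\{B_1,\dots,B_n\}$ and $\Gamma\vdash_{\mathcal{L}}A$, i.e.\ $B_1\gbund\dots\gbund B_n\vdash_{\mathcal{L}}A$, then every $\mathcal{L}$-full set containing $B_1\gbund\dots\gbund B_n$ contains $A$ (full sets being $\vdash_{\mathcal{L}}$-closed), so by the Truth Lemma $(B_1\gbund\dots\gbund B_n)^{\myI_{\mathcal{L}}}\subseteq (A)^{\myI_{\mathcal{L}}}$, that is $\myI_{\mathcal{L}}: \Gamma\models A$. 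For completeness I argue contrapositively: if $B_1\gbund\dots\gbund B_n\not\vdash_{\mathcal{L}}A$, the set $Z=\{C\mid B_1\gbund\dots\gbund B_n\vdash_{\mathcal{L}}C\}$ is $\mathcal{L}$-full --- closed under $\vdash_{\mathcal{L}}$ by the cut rule, closed under $\gbund$ by the $\gbund$-introduction rule, and consistent since $A\notin Z$ --- and it contains $B_1\gbund\dots\gbund B_n$ but not $A$; by the Truth Lemma $Z$ witnesses $(B_1\gbund\dots\gbund B_n)^{\myI_{\mathcal{L}}}\not\subseteq (A)^{\myI_{\mathcal{L}}}$, i.e.\ $\myI_{\mathcal{L}}: \Gamma\not\models A$.

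Before the Truth Lemma one must verify that $\myI_{\mathcal{L}}$ really is an orthomodel. Symmetry of $\bot_{\mathcal{L}}$ uses double negation elimination: from $A\in X$, $\gbnicht A\in Y$ take $B:=\gbnicht A$, so $B\in Y$, while $\gbnicht B=\gbnicht\gbnicht A$ is interderivable with $A$; since $A\in X$ and $X$ is $\vdash_{\mathcal{L}}$-closed, $\gbnicht B\in X$. Irreflexivity uses the absurdity axiom: $A,\gbnicht A\in X$ would give $A\gbund\gbnicht A\in X$ and then $B\in X$ for all $B$, contradicting $X\neq Fml(P)$. The remaining requirement --- that each $(P_i)^{\myI_{\mathcal{L}}}$ is $\bot$-closed --- and in fact the whole negation case of the Truth Lemma both come down to the single identity
\[
 \{Y\in X_{\mathcal{L}}\mid B\in Y\}^{*}=\{Y\in X_{\mathcal{L}}\mid \gbnicht B\in Y\}\qquad(\star)
\]
valid for every formula $B$: applying $(\star)$ twice and using $\gbnicht\gbnicht B\dashv\vdash B$ together with $\vdash_{\mathcal{L}}$-closure gives $\{Y\mid B\in Y\}^{**}=\{Y\mid B\in Y\}$, so in particular $(P_i)^{\myI_{\mathcal{L}}}$ is $\bot$-closed.

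With $(\star)$ in hand the Truth Lemma follows by induction on $A$: the base case is the definition of $(\cdot)^{\myI_{\mathcal{L}}}$; the $\gbund$-case uses $B\gbund C\vdash B$, $B\gbund C\vdash C$ and $\gbund$-closure of full sets; the $\gbnicht$-case is exactly $(\star)$ plus the induction hypothesis; and the $\gboder$-case reduces to these via the de Morgan axiom $A\gboder B\dashv\vdash\gbnicht(\gbnicht A\gbund\gbnicht B)$ (which is also how $(\cdot)^{\myI_{\mathcal{L}}}$ treats $\gboder$). The inclusion $\supseteq$ of $(\star)$ is routine: if $\gbnicht B\in Y$ and $Z$ is full with $B\in Z$, then $Z\bot_{\mathcal{L}}Y$, hence $Y\bot_{\mathcal{L}}Z$ by symmetry.

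The inclusion $\subseteq$ of $(\star)$ is the real obstacle --- the ``extension step'' of Goldblatt's construction. Here one assumes $\gbnicht B\notin Y$ and must exhibit a full set $Z$ with $B\in Z$ that is \emph{not} $\bot_{\mathcal{L}}$-related to $Y$, i.e.\ such that no formula $C$ satisfies both $C\in Y$ and $\gbnicht C\in Z$. The natural candidate is $Z:=\{C\mid B\vdash_{\mathcal{L}}C\}$, which is full and contains $B$ exactly as in the completeness argument above; its incompatibility with $Y$ is the delicate point and is where contraposition and double negation elimination combine: if some $C\in Y$ had $\gbnicht C\in Z$, i.e.\ $B\vdash_{\mathcal{L}}\gbnicht C$, then contraposition yields $\gbnicht\gbnicht C\vdash_{\mathcal{L}}\gbnicht B$, hence $C\vdash_{\mathcal{L}}\gbnicht B$, forcing $\gbnicht B\in Y$ by $\vdash_{\mathcal{L}}$-closure --- contradicting $\gbnicht B\notin Y$. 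Verifying that this $Z$ does the job (and handling degenerate cases, e.g.\ formulas from which everything is derivable, via the precise notion of full set) is the heart of the argument; everything else is bookkeeping over the axioms and rules of Fig.~\ref{fig:orthologic}, which is why the result holds uniformly for every orthologic $\mathcal{L}$.
\end{proofsketch}
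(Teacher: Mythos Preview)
The paper does not give its own proof of this proposition; it is quoted as a result of Goldblatt \cite{goldblatt74semantic} and left unproved. Your sketch \emph{is} the standard canonical-model argument from that source---Truth Lemma by structural induction, with the identity $(\star)$ carrying both the negation clause and the $\bot$-closedness of the $(P_i)^{\myI_{\mathcal{L}}}$, and the deductive closure of $B$ as the witnessing full set in the extension step---so the approach is correct and coincides with the intended one.
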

This then gives the completeness and correctness result for the class $\theta$ of all orthoframes for  minimal logic $Omin$.  
 
 \begin{proposition}[\cite{goldblatt74semantic}]\label{prop:minimalModelCompletenessAndSoundness} 
 $\Gamma \vdash_{Omin} A$ iff $ \Gamma \vDash_{\theta} A$.
 \end{proposition}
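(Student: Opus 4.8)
The plan is to prove Proposition~\ref{prop:minimalModelCompletenessAndSoundness} by combining Goldblatt's canonical model result (Proposition~\ref{prop:canonicOrhtoModelCaptures}) with a soundness argument, treating the two directions separately.

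\medskip
\noindent\textbf{Soundness ($\Rightarrow$).} First I would show that every rule and axiom of $Omin$ (Fig.~\ref{fig:orthologic}) is validated in every orthomodel over every orthoframe in $\theta$; by induction on the length of a derivation this gives $\Gamma \vdash_{Omin} A$ implies $\Gamma \vDash_{\theta} A$. The axioms are immediate from the definitions of $(\cdot)^{\myI}$: $A \vdash A$ and the two projections $A \gbund B \vdash A$, $A \gbund B \vdash B$ reduce to $S \subseteq S$ and $S \cap T \subseteq S$; the double-negation axioms $A \dashv\vdash \gbnicht\gbnicht A$ hold because $(A)^{\myI}$ is $\bot$-closed, i.e.\ $(A)^{\myI} = ((A)^{\myI})^{**}$; the absurdity axiom $A \gbund \gbnicht A \vdash B$ holds because $S \cap S^* = \emptyset$ by irreflexivity (here the footnote remark matters: for the ``almost irreflexive'' dot-product relation one still gets $S \cap S^* \subseteq \{0\}$, which Goldblatt's framework absorbs); and the de Morgan axiom for $\gboder$ is true by the definition $(A \gboder B)^{\myI} := ((\gbnicht A \gbund \gbnicht B))^{\myI}$. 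For the rules: transitivity of $\subseteq$ handles the cut rule, the meet-introduction rule is $S \subseteq T$ and $S \subseteq U$ imply $S \subseteq T \cap U$, and contraposition follows from antitonicity of $(\cdot)^*$, namely $S \subseteq T$ implies $T^* \subseteq S^*$.

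\medskip
\noindent\textbf{Completeness ($\Leftarrow$).} For the converse I would invoke Proposition~\ref{prop:canonicOrhtoModelCaptures} directly: the canonical orthomodel $\myI_{Omin}$ is built over an orthoframe, and that orthoframe belongs to $\theta$ since $\theta$ is by definition the class of \emph{all} orthoframes. Hence if $\Gamma \vDash_{\theta} A$ then in particular $\myI_{Omin}\colon \Gamma \models A$, and by Proposition~\ref{prop:canonicOrhtoModelCaptures} this yields $\Gamma \vdash_{Omin} A$. The only thing requiring care is checking that the canonical construction genuinely produces an orthoframe in the sense defined above --- that $\bot_{Omin}$ is symmetric (clear, using that $A \in X$, $\gbnicht A \in Y$ is equivalent by double negation to $\gbnicht\gbnicht A = A$-type reasoning to the symmetric condition) and irreflexive on consistent $\mathcal{L}$-full sets (if $A \in X$ and $\gbnicht A \in X$ then $\gbnicht$-absurdity forces $X = Fml(P)$, contradicting consistency), and that each $(P_i)^{\myI_{Omin}}$ is $\bot$-closed. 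These are exactly the properties Goldblatt verifies, so I would cite \cite{goldblatt74semantic} for them rather than reprove them.

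\medskip
\noindent\textbf{Main obstacle.} The genuinely load-bearing step is the completeness direction, and its substance is entirely in Proposition~\ref{prop:canonicOrhtoModelCaptures} (the truth lemma for the canonical model, i.e.\ $(A)^{\myI_{Omin}} = \{Y \mid A \in Y\}$, which in turn needs the separation property for $\bot$-closed sets coming from $\mathcal{L}$-full sets). Since that proposition is already available to us as a cited result, the remaining work is the routine but necessary soundness induction above; I expect no conceptual difficulty, only the bookkeeping of checking each axiom/rule, with the single subtle point being the interaction between the ``almost irreflexive'' orthogonality relation and the absurdity axiom, which the footnote already flags as harmless.
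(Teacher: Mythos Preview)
Your proposal is correct and matches the paper's (and Goldblatt's) approach: the paper does not give an explicit proof of Proposition~\ref{prop:minimalModelCompletenessAndSoundness} but simply remarks that it follows from Proposition~\ref{prop:canonicOrhtoModelCaptures}, which is precisely the completeness direction you spell out, together with the routine soundness induction you sketch. Your write-up is in fact more detailed than anything in the paper itself, which defers entirely to \cite{goldblatt74semantic} for this result.
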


\section{Towards A Logic of Cones}\label{sect:coneLogic}
We are interested in extensions of minimal orthologic Omin  \cite{goldblatt74semantic} with rules that describe properties of arbitrary ortholattices of cones. For this we consider structures  $(X, \bot)$ where $X = \mathbb{R}^n$ and $\bot$ is a binary relation defined by $\langle x, y \rangle \leq 0$ for $x, y \in X$. 
We call those frames \emph{cone-based orthoframes} because the $\bot$-closed sets in these orthoframes are cones.  \emph{Cone-based orthomodels} assign to each propositional variable a $\bot$-closed set over a cone-based orthoframe. So formally, we aim at extending Omin with rules that hold in all cone-based orthomodels.

 In search for such  propositional logics, we discovered that many well-known candidate rules are falsified by cones.  Hence, first, we are going to discuss counterexamples to those rules before we describe our generalization \cmtp of the orthomodularity rule and prove relevant properties of minimal orthologic extended with \cmtp. In particular, by this line of presentation we intend to show there is no logic of cones yet captured by any of the rules discussed in the literature on logics or lattices. 
 
 \subsection{Cone Counterexamples for Some Prominent Rules\label{sect:counteexamples}}   
Table~\ref{tab:falsifiedRules} summarizes some prominent rules for which falsifying cone-based orthomodels exist. The rules above the double line are rules in a propositional calculus that are either discussed  directly in the logic literature or that are obvious translations of rules discussed in the context of lattice theory. The rules below the double line are defined  without a reference to a calculus but directly with lattice-theoretic notions  because these are not expressible in a propositional calculus without propositional quantifiers---as in the case of the calculus of Goldblatt \cite{goldblatt74semantic} which we consider in this paper. 
    
Counterexamples for (D), (MSD), (wLLJ), (LLJ) (second figure from left), (JSD) (third figure from left) and (W) (outer right figure) are shown in Fig.~\ref{fig:coneCounterexamples}.  The left figure is intended to illustrate the general connection between the rules and their assignments to cones as well the fact that the orthonegation is different from classical negation (complementation).    
We already mentioned that the hexagon lattice (Fig.\ \ref{fig:hexagonAndItsConeEmbedding}) is a counterexample to orthomodularity---and that it is actually the minimal lattice that any lattice not fulfilling orthomodularity must contain as a sublattice. Thus it is also a counterexample for (M) and (D). Moreover, the hexagon lattice serves as a counterexample for the condition of Birkhoff (Bi) and hence also of (MS), (Mac1), (SM): Let $a$ and $b$ of the rule (Bi) be instantiated by $b$ and $\oneg{a}$ of the hexagon in Fig.\ \ref{fig:hexagonAndItsConeEmbedding}. Then the precondition ist fulfilled as $\lbot = b \lmeet \oneg{a} \lcover b$  and $\lbot = b \lmeet \oneg{a} \lcover \oneg{a}$. On the other hand we do not have $b \lcover b \ljoin \oneg{a} = \ltop$ nor  $\oneg{a} \lcover b \ljoin \oneg{a} = \ltop$. 

\begin{table}[tb]
\footnotesize
\centering
\begin{tabular}{{|}l{|}c{|}l{|}}
\hline 
\textbf{Name} & \textbf{Propositional Calculus Rule} & \textbf{Comment} \\ \hline 
Distributivity (D) & 
\begin{tabular}{l}
$A \gbund (B \gboder C) \dashv \vdash$\\
$ (A \gbund B)  \gboder (A \gbund C)$
  \end{tabular} & 
\begin{tabular}{l}
Adding (D)\\
 to ortholattices\\
gives  boolean logic
\end{tabular}\\ \hline 
\begin{tabular}{l}
Meet-Semi-\\
Distributivity (MSD) 
\end{tabular}   &
\begin{tabular}{c}
 $A \gbund C \dashv \vdash B \gbund C$ \\ \hline  
$A \gbund C \dashv \vdash (A \gboder B) \gbund C$
 \end{tabular}
 & \begin{tabular}{l}Weakening of (D)	
	\end{tabular}\\ \hline 
\begin{tabular}{l}
Join-Semi-\\
Distributivity (JSD) 
\end{tabular}   &
\begin{tabular}{c}
 $A \gboder C \dashv \vdash B \gboder C$ \\ \hline  
$A \gboder C \dashv \vdash (A \gbund B) \gboder C$
 \end{tabular}
 & \begin{tabular}{l}Weakening of (D)	
	\end{tabular}\\ \hline 
Modularity (M) &   
\begin{tabular}{c}
 $C \vdash A$\\ \hline
 $A \gbund (B \gboder C) \vdash (A \gbund B) \gboder C$
	\end{tabular} & Weakening of (D)\\	\hline
\begin{tabular}{l}
Orthomodularity\\
(Omr) 
\end{tabular}   &
\begin{tabular}{c}
 $A \vdash B$, $\gbnicht A \vdash C$\\ \hline  
 $A \gboder (B \gbund C) \dashv \vdash (A \gboder B) \gbund (A \gboder C)$
 \end{tabular}
 & \begin{tabular}{l}Weakening of (M);  gives\\
 quantum logic	
	\end{tabular}\\ \hline 
\begin{tabular}{l}Johannson's minimal\\ negation (LLJ)\end{tabular} &\begin{tabular}{c}
 $A \gbund B \vdash C$\\ \hline
	$A \gbund \gbnicht  C \vdash \gbnicht B$
	\end{tabular} & \begin{tabular}{l} Only rule in Dunn's kite \\ \cite{hartonas16reasoning} falsified by cones.\end{tabular}\\ \hline 
\begin{tabular}{l}Weak J.'s negation\\ (wLLJ)\end{tabular} &\begin{tabular}{c}
 $A \gbund B \vdash D \gbund \gbnicht D$\\ \hline
	$A \vdash \gbnicht B$
	\end{tabular} & \begin{tabular}{l} Weakening of (LLJ)\end{tabular} \\ \hline\hline 
\textbf{Name} & \textbf{Lattice rule} & \textbf{Comment} \\ \hline 
M-symmetry \cite[p.\ 66]{stern99semimodular}&  If $M(a,b)$ then $M(b,a)$ & \begin{tabular}{l} Weakening of (M)\end{tabular}\\ \hline 
	\begin{tabular}{l}Mac Lane's Cond.\\
	(Mac$_{1}$) \cite[p.\ 111]{stern99semimodular}
	\end{tabular} &  \begin{tabular}{c} If $b \lmeet c < a < c < b \ljoin a$\\ 
then there is a $d$ with  $b \lmeet c < d \leq b$\\
 and  $a = (a \ljoin d) \lmeet c$\end{tabular}& Weakening of (M)\\ \hline 
	\begin{tabular}{l}Semimodularity\\
	 (SM) \cite[p.\ 2]{stern99semimodular}
	\end{tabular} &  
	If $a\lmeet b\lcover a$ then $b \lcover a \ljoin b$
	 & \begin{tabular}{l}Weakening of (MS) \\ and (Mac$_{1}$)\\ equivalent in the finite \end{tabular}  \\ \hline 
	 \begin{tabular}{l}Birkhoffs's Covering\\
	 (Bi) \cite[p.\ 3]{stern99semimodular} 
	\end{tabular} &  
	If $a\lmeet b\lcover a,b$ then $a,b \lcover a \ljoin b$
	 & \begin{tabular}{l}Weakening of (SM)\\ and (Mac$_{1}$) \end{tabular} \\ \hline 
	\begin{tabular}{l}Whitman Cond. (W)\\
	\cite[p.479]{graetzer11lattice}\end{tabular} & \begin{tabular}{c} If $a \lmeet b \leq c \ljoin d$ \\  then $a \leq c \ljoin d$ or $b \leq c \ljoin d$\\
	 or $a \lmeet b \leq c$ or $a \lmeet b \leq d$ \end{tabular}& \begin{tabular}{l}Used in discussion\\ of free lattices\end{tabular}\\  
	 \hline
\end{tabular}
\caption{Well-known rules excluded from a logic of cones.\label{tab:falsifiedRules}}
\end{table}
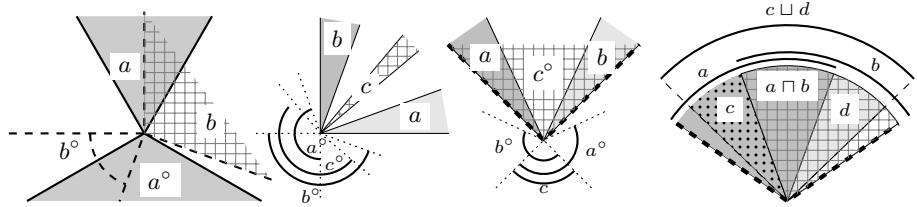
\begin{figure}[tb]
\centering
\begin{tikzpicture}[scale=0.9]
\draw[thick,draw=black,fill=black!20] (60:2) -- (0,0) -- (120:2);
\draw[thick,draw=black,fill=black!20] (330:2) -- (0,0) -- (210:2);
\draw[thick,draw=black,dashed] (340:2) -- (0,0) -- (90:1.8);
\draw[thick,draw=black,dashed] (180:2) -- (0,0) -- (250:1.07);
\node[fill=white,xshift=1mm] at (280:0.7) {$\cnicht{a}$};
\node[fill=white,xshift=-1mm] at (100:1) {$a$};
\begin{scope}
\clip (340:2) -- (0,0) -- (90:1.8) -- cycle;
	\draw[black!50,step=0.15cm] (-1,-1) grid (3,3);
\end{scope}
\node[fill=white] at (10:1) {$b$};
\draw[thick,dashed] ++(180:.8) arc (180:250:.8) node[near start,xshift=-3mm] {$\cnicht{b}$};

\begin{scope}[shift={(2.6,-0.0)},scale=0.95]
\draw[draw=black,fill=black!25] (90:1.8) -- (0,0) -- (70:1.8);
\draw[draw=black,fill=black!10] (0:2) -- (0,0) -- (20:2);
\draw[] (40:2) -- (0,0) -- (50:2);
\begin{scope}
\clip (40:2) -- (0,0) -- (50:2) -- cycle;
	\draw[black!50,step=0.15cm] (-1,-1) grid (3,3);
\end{scope}
\node[fill=white] at (10:1.5) {$a$};
\node[fill=white] at (45:1) {$c$};
\node[fill=white] at (80:1.5) {$b$};
\draw[thick] ++(180:.8) arc (180:340:.8) node[midway,yshift=-1.5mm] {\scriptsize$\cnicht{b}$};
\draw[thick] ++(110:.4) arc (110:270:.4) node[midway,xshift=3mm,yshift=-1mm] {\scriptsize$\cnicht{a}$};
\draw[thick] ++(140:.64) arc (140:320:.64) node[near end,fill=none,xshift=1.5mm,yshift=1.5mm] {\scriptsize$\cnicht{c}$};
\foreach \a in {130, 320,270,110,180,340}
{
  \draw[draw=black,dotted] (\a:1) -- (0,0) -- (\a:1);
}
\end{scope}
\begin{scope}[shift={(5.9,-0.1)}]
\draw[draw=black,fill=black!25] (135:2) -- (0,0) -- (115:2);
\draw[draw=black,fill=black!10] (65:2) -- (0,0) -- (45:2);
\draw[ultra thick, dashed, draw=black] (135:2) -- (0,0) -- (45:2);
\begin{scope}
\clip (45:2) -- (0,0) -- (135:2) -- cycle;
	\draw[black!50,step=0.15cm] (-2,-1) grid (3,3);
\end{scope}
\node[fill=white] at (125:1.5) {$a$};
\node[fill=white] at (55:1.5) {$b$};
\node[fill=white] at (90:1.0) {$\cnicht{c}$};
\draw[thick] ++(225:.64) arc (225:315:.64) node[midway,below,fill=none,yshift=1.0mm] {\scriptsize $c$}; 
\draw[thick] ++(155:.3) arc (155:315:.3) node[near start,left,fill=none] {\scriptsize $\cnicht{b}$};
\draw[thick] ++(225:.5) arc (225:385:.5) node[near end,right,fill=none] {\scriptsize $\cnicht{a}$};
\foreach \a in {225, 25,155,315}
{
  \draw[draw=black,dotted] (\a:1) -- (0,0) -- (\a:1);
}
\end{scope}
\begin{scope}[shift={(9.5,-1.)}]
\begin{scope}
\clip (145:2) -- (0,0) -- (70:2) arc  (70:145:2);
\draw[draw=black,fill=black!25] (145:2.8) -- (0,0) -- (70:2.8);
\end{scope}
\draw[draw=black, dashed] (136:2.5) -- (0,0);

\begin{scope}
\clip (136:2) -- (0,0) -- (110:2) arc  (110:136:2);
\draw[draw=black,pattern=dots] (136:2.8) -- (0,0) -- (110:2.8);
\end{scope}

\begin{scope}
\clip (70:2) -- (0,0) -- (35:2) arc  (35:70:2);
\draw[draw=black,fill=black!10] (70:2.8) -- (0,0) -- (44:2.8);
\end{scope}
\draw[ultra thick, dashed, draw=black] (145:2.) -- (0,0) -- (35:2.);
\draw[draw=black, dashed] (44:2.5) -- (0,0);
\begin{scope}
\clip[draw] (35:2) -- (0,0) -- (110:2) arc (110:35:2);
	\draw[black!50,step=0.15cm] (-1,-1) grid (2,3);
\end{scope}
\node[fill=white] at (122:1.6) {\scriptsize $c$};
\node[fill=white] at (58:1.6) {\scriptsize $d$};
\node[fill=white] at (90:1.7) {\scriptsize $a \cund b$};
\draw[thick] ++(44:2.6) arc (44:136:2.6) node[midway,above,fill=none,yshift=0.1mm] {\scriptsize $c \coder d$}; 
\draw[thick] ++(70:2.1) arc (70:145:2.1) node[pos=0.75,above,fill=none,yshift=0.1mm] {\scriptsize $a$}; 
\draw[thick] (35:2.2) arc (35:110:2.2) node[pos=0.25,above,fill=none,yshift=0.1mm] {\scriptsize $b$}; 

\end{scope}

\end{tikzpicture}
\caption{\label{fig:coneCounterexamples}Left: Cone $a$ and its polar $\cnicht{a}$ represent $A$, respectively $\gbnicht A$. 
Part of $B$ is neither $A$ nor $\gbnicht A$.
Second from left: Counter-example (D), (MSD), (wLLJ), and (LLJ) (letting $C = \lbot$). Third form left: Counter-example (JSD).  Outer right: Conuterexample (W)}
\end{figure}



 As we can find cone-based orthomodels not fulfilling the distributivity law (D) of $\lmeet$ over $\ljoin$,  the logic generating cones must be some non-classical propositional logic. 
 Moreover, some intuitive rewritings known from classical logics are not possible. For example, Johansson's constructive contraposition (LLJ) and its weakening \wllj \raus{(see Sect. \ref{sect:preliminaries})} do not hold.
Closed convex cones falsifying \wllj is a demonstration 
 that there are two different notions of complement.  The first says that $b$ is a complement of $a$ iff $a \lmeet b \lunder \lbot$, the second iff $a \lunder \oneg{b}$. In fact, it is known that any ortholattice which has a unique complement must in fact be distributive and hence must be a Boolean algebra (though this is not the case for arbitrary lattices \cite{dilworth45lattices}).      
Regarding this basic fact one might wonder whether there are at all some interesting implications that can be drawn for closed convex cones in the signature of ortholattices. In Sect.\ \ref{sect:coneLogic} we will show that this is indeed the case.

As the orthomodularity is not fulfilled,  the kind of logic induced by closed cones cannot be that of quantum logics \cite{engesser07new}. In fact, the main underlying geometric structure for quantum logics is not that of a cone (at least in the pioneering work of von Neumann/Birkhoff \cite{birkhoff37logic}) but  that of a closed subspace of a Hilbert space.     
But as the following subsection is going to show, there is a generalization of the orthomodularity rule that holds for any cone-based orthomodel (see Fig.~\ref{fig:rulecmtp}). 

%
%
%
%

The Whitman condition is discussed in the context of free product lattices.  
A counterexample in $\mathbb{R}^2$ is described in the outer right figure in Fig.\  \ref{fig:coneCounterexamples}.

 As we consider convex cones, an obvious question is whether rules considered in work on convex geometries \cite{adaricheva16convex} may hold. In fact, the main property considered in this context is the anti-exchange property (AEP). This rule is defined for  closure systems with a monotonic, idempotent, cumulative operator $cl$. (Hence we did not mention in our overview table.) 
\begin{itemize}
\item[(AEP)] For all $x \neq y \in X$ and cl-closed sets $A$: If $x \in cl(A \cup \{y\})$ and $x \notin  A$, then $y \notin cl(A \cup \{x\})$. 
\end{itemize} 
 The conic hull $\conicHull{\cdot}$ operator is a closure operator, but it does not fulfil (AEP): consider in $\mathbb{R}^2$ the cone $A = \mathbb{R}_{\geq 0} \times \mathbb{R}_{\geq 0}$ and the points $y = (-1,-1)$, $x = (-2,-2)$. Then  $x \in cl(A \cup \{y\})$ just because $x$ is already in the conic hull of $y$, and x $\notin A$. But $y \in cl(A \cup \{x\})$ as $y$ is in the conic hull of $x$.

\subsection{A Logic of Partial Orthomodularity and its Properties}\label{subsect:logicOfPOM}

As cones make up an ortholattice (see preliminaries) our starting point is the minimal orthologic  Omin \cite{goldblatt74semantic}.  
 We introduce the rule  \cmtp  (Fig.\ \ref{fig:rulecmtp}, left) to minimal orthologic and call the resulting logic the \emph{minimal logic of partial modularity}, for short $\pomLogic = Omin + \cmtp$. 



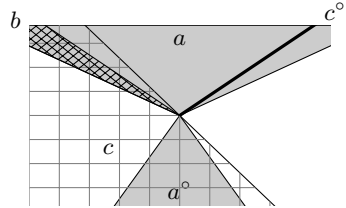
\begin{figure}
\centering
\begin{tikzpicture}[yscale=0.8]
\draw node at (0,0){
 \begin{tabular}{l@{\hskip 0.5cm  }rcl}
\AteilB& $B$ & $\vdash$ & $A$ \\ 
\nichtABunterC& $\gbnicht A \gboder B $ & $\vdash$ & $ C$\\ 
\BCACC& $B \gboder \gbnicht C $ & $\dashv \vdash$ & $ (A  \gbund C) \gboder \gbnicht C$\\ \hline 
\ABconclusion & $A \gbund (\gbnicht A \gboder B)$ & $\vdash$ & $  B$ 
\end{tabular}};
\begin{scope}[shift={(6.2,-0.55)}]
\clip(-2,-1.5) rectangle (2,1.5);
\draw[fill=black!20!white,draw=black] (0,0) -- (150:3cm) --(30:3cm)--cycle; \draw node[black,yshift=10mm] {$a$};
\draw[fill=black!20!white,draw=black] (0,0) -- (240:1.8cm) --(-60:1.8cm)--cycle; \draw node[black,yshift=-10mm] {$a^\circ$};
\draw (130:3cm) -- (310:3cm);
\draw[very thick] (0,0) -- (40:2.5cm); 
\begin{scope}
\clip (130:3cm) -- (310:3cm) -- (-2,-2) -- (-2,2) -- cycle;
\draw[step=0.4cm,black!50!white] (-2,-2) grid (2,2);
\end{scope}
\draw (210:1.1cm) node {$c$};
\begin{scope}
\clip[draw] (0,0) -- (140:4cm) -- (150:4cm) -- cycle;
\draw[step=0.1cm,rotate=45] (-3,-3) grid (3,3);
\end{scope}
\end{scope}
\draw[xshift=6.2cm, yshift= -0.55cm] (144:2.7cm) node {$b$};
\draw[xshift=6.2cm, yshift= -0.55cm] (40:2.7cm) node {$c^\circ$};
\end{tikzpicture}
\caption{Rule \cmtp (left) and its illustration in $\mathbb{R}^2$ (right)}
\label{fig:rulecmtp}
\end{figure}

For the discussion of the rule we consider orthomodels $\myI$ where $(A)^{\myI} = a, (B)^{\myI} = b$ and $(C)^{\myI} = c$.  
The intuition behind this rule is the following (see also Fig.\ \ref{fig:rulecmtp}, right): The conclusion of the rule is exactly the conclusion of the orthomodularity given in its dual form (dsOMr). Without all the premisses this rule does not hold for cones: Considering only the premise \AteilB gives us exactly (dsOMr) which we have shown not to hold for cones. 

But the conclusion of \cmtp can be guaranteed to hold under specific circumstances where $\bCone$ and $\cnicht{\aCone} $ are contained in a halfspace $\cCone$ and where $\bCone$ is at the correct border of $\aCone$  namely when $\bCone$ and $\aCone \cund \cCone$ are \emph{perspective} w.r.t.\ $\cnicht{\cCone}$.\footnote{For the notion of perspectivity see \cite{maeda70theory}}. 

 If $\cCone$ were   larger than a halfspace, it would have to be the whole space $\ctop$ and in that case $\cnicht{\aCone} \coder \bCone \cunder \cCone$ would give no constraint at all.  
In  a simple propositional calculus we cannot express that $\cCone$ is a halfspace. But this is not a problem because  if $\cCone = \ctop$, then the other pre-conditions of \cmtp ensure that the conclusion holds trivially as then $\aCone = \bCone$ and  so $\aCone \cund (\cnicht{\aCone} \coder \bCone) \cunder  \bCone$. 
\begin{theorem}\label{thm:PomHoldsInAllCones}
Any subortholattice of $\mathcal{C}_{n}$ fulfils \cmtp. 
\end{theorem}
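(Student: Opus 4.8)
\emph{Proof proposal.} Since a subortholattice $L\subseteq\mathcal{C}_{n}$ is by definition closed under $\cap$, $\coder$ and $\cnicht{\cdot}$, and the values of these operations in $L$ coincide with those computed in $\mathcal{C}_{n}$, the implication ``premises of \cmtp $\Rightarrow$ conclusion'' for a triple $(a,b,c)$ has the same truth value in $L$ and in $\mathcal{C}_{n}$. So the plan is to verify \cmtp directly for arbitrary closed convex cones $a,b,c\in\mathcal{C}_{n}$. In cone language the premises read: (\AteilB) $b\subseteq a$; (\nichtABunterC) $\cnicht{a}\coder b\subseteq c$, equivalently $\cnicht{a}\subseteq c$ and $b\subseteq c$; (\BCACC) $b\coder\cnicht{c}=(a\cap c)\coder\cnicht{c}$; and we must show (\ABconclusion) $a\cap(\cnicht{a}\coder b)\subseteq b$. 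The reverse inclusion holds in every ortholattice, so only the displayed one is at issue. First I would record the cheap consequences: $\cnicht{a}\subseteq c$ gives $\cnicht{c}\subseteq a$, and $b\subseteq c$ gives $\cnicht{c}\subseteq\cnicht{b}$; set $d:=a\cap c$, so $b\subseteq d\subseteq a$. For $v\in a\cap(\cnicht{a}\coder b)$, premise (\nichtABunterC) puts $v$ in $a\cap c=d$, and then premise (\BCACC) puts $v$ in $d\coder\cnicht{c}=b\coder\cnicht{c}$. Using de Morgan to write $\cnicht{a}\coder b=\cnicht{(a\cap\cnicht{b})}$, the goal becomes: if $v\in a\cap c$, $v\in\cnicht{(a\cap\cnicht{b})}$ and $v\in b\coder\cnicht{c}$, then $v\in b$.

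The geometric engine is Moreau's conic decomposition: writing $P_{K}$ for the metric projection onto a closed convex cone $K$, every $x\in\mathbb{R}^{n}$ splits uniquely as $x=P_{K}(x)+P_{\cnicht{K}}(x)$ with $\langle P_{K}(x),P_{\cnicht{K}}(x)\rangle=0$. Applying this to $v$ and $b$ gives $v=P_{b}(v)+u$ with $u:=P_{\cnicht{b}}(v)\in\cnicht{b}$, $P_{b}(v)\in b$, and $\langle P_{b}(v),u\rangle=0$. I claim it then suffices to prove $u\in a$: for then $u\in a\cap\cnicht{b}$, and $v\in\cnicht{(a\cap\cnicht{b})}$ forces $0\ge\langle v,u\rangle=\langle P_{b}(v)+u,u\rangle=\|u\|^{2}$, whence $u=0$ and $v=P_{b}(v)\in b$. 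To get $u\in a$, i.e.\ $\langle u,z\rangle\le 0$ for all $z\in\cnicht{a}$, I would exploit $\cnicht{a}\subseteq c$ together with a representation $v=q+r$ (in general a limit of such) with $q\in b$, $r\in\cnicht{c}$ coming from $v\in b\coder\cnicht{c}$, Moreau applied with respect to $c$, and the sign relations $\langle q,z\rangle\le 0$ (as $q\in b\subseteq a$), $\langle r,z\rangle\le 0$ (as $r\in\cnicht{c}$ and $z\in\cnicht{a}\subseteq c$), plus the nonexpansiveness of $P_{\cnicht{b}}$. Geometrically this is exactly the configuration of Fig.~\ref{fig:rulecmtp} (right): $b$ lies at the correct face of $a$, and $b$ and $a\cap c$ are perspective with axis $\cnicht{c}$; one should also note the convenient polar form of (\BCACC), namely $\cnicht{b}\cap c=\cnicht{(a\cap c)}\cap c=(\cnicht{a}\coder\cnicht{c})\cap c$, which is where all three premises become coupled.

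The hard part is this last step. The naive auxiliary claims of the shape ``the $c$-projection of an element of $\cnicht{b}$ stays in $\cnicht{b}$'' are \emph{false} for general $b\subseteq c$, so the argument must genuinely use (\BCACC) (via its polar form above) and not merely to land $v$ in $b\coder\cnicht{c}$; getting the estimates to close is where I expect the real work to lie. A secondary nuisance is that $b+\cnicht{c}$ need not be closed, so $v\in b\coder\cnicht{c}$ only yields $v$ as a limit of sums $q_{k}+r_{k}$; I would handle this by a routine limiting/compactness argument (normalising and extracting convergent subsequences), or by first treating polyhedral cones and passing to the limit. The degenerate case $c=\ctop$ is harmless: then (\nichtABunterC) is vacuous, but (\BCACC) reads $b=b\coder\cbot=(a\cap\ctop)\coder\cbot=a$, so $a\cap(\cnicht{a}\coder b)=a\cap\ctop=a=b$, and the conclusion holds trivially.
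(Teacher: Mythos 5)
Your reduction to verifying \cmtp in $\mathcal{C}_{n}$ itself is fine, the translation of the premises into cone language is correct, and the endgame is sound: with the Moreau decomposition $v=P_{b}(v)+u$, $u:=P_{\cnicht{b}}(v)\in\cnicht{b}$, $\langle P_{b}(v),u\rangle=0$, the membership $v\in\cnicht{a}\coder b=\cnicht{(a\cap\cnicht{b})}$ does force $\|u\|^{2}=\langle v,u\rangle\le 0$ \emph{once $u\in a$ is known}, hence $u=0$ and $v\in b$. The degenerate case $c=\ctop$ and the remark that $b\cund\cnicht{c}$ need not be closed are also handled correctly.

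The genuine gap is exactly where you place it yourself: the claim $P_{\cnicht{b}}(v)\in a$ is never proved. That claim carries the entire content of the theorem---it is the only place where \nichtABunterC and \BCACC can enter (with \AteilB alone the conclusion is just (dsOMr), which you correctly note fails for cones)---and the paragraph meant to deliver it is a list of ingredients (``I would exploit \dots, plus the nonexpansiveness of $P_{\cnicht{b}}$'') with no argument connecting them; you then concede that the obvious auxiliary lemmas are false and that ``getting the estimates to close is where the real work lies.'' So what you have is a correct reformulation plus a correct wrap-up, but the middle step that couples the three premises is missing: as written this is a proof plan, not a proof. For comparison, the paper takes a different and more elementary route that avoids projections entirely: assuming $X=a\cap\conicHull{\cnicht{a}\cup b}\setminus b\neq\emptyset$, it writes $x\in X$ as $\lambda x_{a}^{\circ}+\mu x_{b}$ with $x_{a}^{\circ}\in\cnicht{a}$, $x_{b}\in b$, chooses $x$ extremal for the normalised inner product $\langle x,x_{a}^{\circ}\rangle\le 0$, and derives a contradiction with \BCACC when this quantity is $0$ and with $x\in\conicHull{\cnicht{a}\cup b}$ when it is negative. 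If you want to complete your route, the precise statement you still owe is: for $v\in a\cund c$ with $v\in b\coder\cnicht{c}$ and $\cnicht{b}\cund c=(\cnicht{a}\coder\cnicht{c})\cund c$, the Moreau component $P_{\cnicht{b}}(v)$ lies in $a$; nothing in the proposal currently establishes this.
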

\begin{proofsketch}
The idea of the proof is to show that no plane exists in which the cones violate \cmtp. Doing so we generalise the intuition depicted for the 2D case in Fig.\ \ref{fig:rulecmtp} (right) to arbitrary cones.
\end{proofsketch}

One obvious question is whether this rule really constrains the class of  ortholattices, i.e., whether it is not possible to derive  \cmtp from the axioms of minimal orthologic. In other words can one find an ortholattice that fulfils the premises of the rule but not its conclusion?  
Moreover, one might ask whether one can prove a  forbidden-subalgebra theorem which identifies a minimal ortholattice that must be excluded as a subortholattice from any ortholattice that fulfils \cmtp? The answer to both questions ist yes.  
In fact, the ortholattice \minC illustrated on the right of Fig. \ref{fig:hexagonAndItsConeEmbedding} is a lattice falsifying \cmtp.  So, in particular  we can state:  
\begin{proposition}\label{prop:cMTPNotderivable}
Rule \cmtp is not derivable in Omin. 
\end{proposition}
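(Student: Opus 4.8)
The plan is to use the soundness of $Omin$ over ortholattices together with the fact --- already announced in the text --- that \minC is an ortholattice in which the lattice-theoretic reading of \cmtp fails. Recall that $Omin$ is sound for ortholattice semantics: for any ortholattice $L$ and any valuation $v$ of the proposition symbols into $L$, an $Omin$-derivation of $X \vdash Y$ yields $v(X) \leq v(Y)$ in $L$, because the meet, join and orthocomplement of an ortholattice satisfy precisely the conditions encoded by the axioms and rules of Fig.~\ref{fig:orthologic} (this is what makes the Lindenbaum--Tarski construction of Section~\ref{subsec:orthologicsPreliminaries} work, and it is also the soundness direction behind Propositions~\ref{prop:canonicOrhtoModelCaptures} and~\ref{prop:minimalModelCompletenessAndSoundness}). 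Hence, if \cmtp were derivable in $Omin$, then in every ortholattice the implication ``if $b \leq a$, $\oneg a \vee b \leq c$ and $b \vee \oneg c = (a \wedge c) \vee \oneg c$, then $a \wedge (\oneg a \vee b) \leq b$'' would hold for all $a,b,c$. So it is enough to exhibit one ortholattice violating this implication.

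I would take \minC from the right of Fig.~\ref{fig:hexagonAndItsConeEmbedding}. First one checks that \minC really is an ortholattice: it is a finite bounded lattice by inspection of its Hasse diagram, and the involution pairing $a \leftrightarrow \oneg a$, $b \leftrightarrow \oneg b$, $c \leftrightarrow \oneg c$, $d \leftrightarrow \oneg d$, $\ltop \leftrightarrow \lbot$ is order-reversing, satisfies $\odneg x = x$, and meets each element with its partner to $\lbot$ --- a mechanical check over the ten elements. Next, instantiate \cmtp with the elements named $a$, $b$, $c$ in the figure. A short computation of meets and joins in \minC gives $\oneg a \vee b = c$, $a \wedge c = d$, and $b \vee \oneg c = a = d \vee \oneg c$. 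Thus all three premises hold: $b \leq a$ (indeed $b < d < a$); $\oneg a \vee b = c \leq c$; and $b \vee \oneg c = a = (a \wedge c) \vee \oneg c$. Yet the conclusion fails, since $a \wedge (\oneg a \vee b) = a \wedge c = d$ while $d \not\leq b$ (in fact $b < d$). This contradicts derivability of \cmtp in $Omin$, proving the proposition; the same witness doubles as the forbidden subalgebra, \minC being minimal among ortholattices that must be excluded.

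There is no real obstacle here beyond bookkeeping; the one point requiring care is the reading of ``derivable in $Omin$'' for a rule whose premises are themselves entailments --- it means the conclusion-entailment is obtainable from the premise-entailments using only the schemata of Fig.~\ref{fig:orthologic} --- and then soundness transports this into the ortholattice implication displayed above, which the computation in \minC defeats. The remaining steps are routine verification of a handful of inequalities and of the ortholattice axioms in a ten-element structure.
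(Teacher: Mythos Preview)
Your proposal is correct and follows essentially the same approach as the paper: exhibit the ortholattice \minC, instantiate the rule with the elements named $a,b,c$, and verify $b\lunder a$, $\oneg a\ljoin b=c$, $b\ljoin\oneg c=a=(a\lmeet c)\ljoin\oneg c$, while $a\lmeet(\oneg a\ljoin b)=a\lmeet c=d\not\lunder b$. The paper's proof records exactly these computations; your additional remarks on soundness and on \minC being an ortholattice make explicit what the paper leaves implicit.
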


In fact, one can show that this minimal structure \minC must be a sub-ortholattice of any ortholattice iff that ortholattice does not fulfil \cmtp. In other words: 

\begin{theorem}\label{thm:forbiddensubalgebra}
	An ortholattice $L$ fulfils \cmtp iff it excludes \minC. 
\end{theorem}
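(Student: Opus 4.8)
The plan is to prove the two implications separately, each in contrapositive form, with the work concentrated in the direction ``failure of \cmtp forces \minC''. \emph{From a copy of \minC to a failure of \cmtp:} suppose $L$ has a subortholattice $L'$ isomorphic to \minC, and transfer the labelling of Fig.~\ref{fig:hexagonAndItsConeEmbedding} (right) to $L'$. A direct computation inside the ten-element lattice \minC shows that the assignment $A\mapsto a$, $B\mapsto b$, $C\mapsto c$ makes all three premises of \cmtp true --- $b\lunder a$; $\oneg a\ljoin b=c$; and $b\ljoin\oneg c=a=(a\lmeet c)\ljoin\oneg c$ --- while the conclusion fails, since $a\lmeet(\oneg a\ljoin b)=a\lmeet c=d$ and $b\lunderp d$. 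As \cmtp is a quasi-identity over the signature of ortholattices and $L'$ is a \emph{sub}ortholattice of $L$, all meets, joins and orthocomplements occurring in this witness are computed identically in $L'$ and in $L$, so the same assignment refutes \cmtp in $L$ (this is essentially Proposition~\ref{prop:cMTPNotderivable} together with the fact that \minC itself falsifies \cmtp).

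\emph{From a failure of \cmtp to a copy of \minC:} fix a witness of the failure, i.e.\ $a,b,c\in L$ with $b\lunder a$, $\oneg a\ljoin b\lunder c$, $b\ljoin\oneg c=(a\lmeet c)\ljoin\oneg c$, but $a\lmeet(\oneg a\ljoin b)\not\lunder b$, and set $d:=a\lmeet c$ and $u:=b\ljoin\oneg c$. First I would collect the \emph{basic consequences} of the premises: $b\lunder c$ and hence $b\lunder d$; $d\lunder u\lunder a$; $d\lunder c$; $\oneg a\lunder c$; $\oneg c\lunder a$; $b\lmeet\oneg c=d\lmeet\oneg c=\lbot$; $b\ljoin\oneg c=d\ljoin\oneg c=u$ (which is just \BCACC rewritten); and $u\lmeet c=d$. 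Then I would rule out every degenerate coincidence by showing that it would make the conclusion \ABconclusion hold, contrary to assumption; the crucial one is $b\lunderp d$, since $d\lunder b$ together with $\oneg a\ljoin b\lunder c$ would give $a\lmeet(\oneg a\ljoin b)\lunder a\lmeet c=d\lunder b$. In the same style one obtains $b\neq\lbot$, $a\neq\ltop$, $c\neq\ltop$, $u\neq\ltop$, $d\lunderp u$, $\oneg c\lunderp u$, and that $\oneg c$ is incomparable with both $b$ and $d$.

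Now $\{\lbot,b,d,u,\oneg c\}$ is a five-element sublattice isomorphic to $N_5$ --- equivalently, $(d,\oneg c)$ is not a modular pair, witnessed by $b$ --- and closing it under orthocomplementation yields the ten elements $\lbot,b,d,\oneg c,u,\oneg u,c,\oneg d,\oneg b,\ltop$. I claim these form a subortholattice of $L$ isomorphic to \minC, with $u$ in the role of \minC's element $a$. Using \BCACC (and de Morgan's laws) in the forms $u\lmeet c=d$ and $\oneg u=\oneg b\lmeet c=\oneg d\lmeet c$, together with $b\lunder c$, one reduces each of the finitely many pairwise meets and joins of these ten elements to one of them, and checks that the ten are pairwise distinct with induced order exactly the Hasse diagram of \minC; this gives the desired embedding, hence the contrapositive.

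The main obstacle is exactly this last step: verifying that the ten-element set is closed --- that no eleventh element is forced --- and that its members are genuinely distinct. Each potential collapse is to be excluded just as in the previous paragraph, by showing it would validate the conclusion \ABconclusion, and each nontrivial reduction of a meet or join passes through the identity $u\lmeet c=d$ and its orthocomplement, which is where premise \BCACC does its real work. This bookkeeping, although finite, is the heart of the proof, and it is what singles out \minC --- rather than some larger ortholattice --- as the minimal obstruction to \cmtp, in parallel with $N_5$ for modularity and $O_6$ for orthomodularity.
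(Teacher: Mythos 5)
The left-to-right reading (a copy of \minC forces a failure of \cmtp) matches the paper's argument and is fine, as is your list of basic consequences and non-degeneracy checks for the converse. The genuine gap sits exactly at the step you yourself call the heart of the proof: the claim that the ten elements $\lbot,b,d,\oneg{c},u,\oneg{u},c,\oneg{d},\oneg{b},\ltop$ (with $d=a\lmeet c$, $u=b\ljoin\oneg{c}$) are closed under the meets and joins of $L$. Most products do reduce as you say --- e.g.\ $u\lmeet c=d$ follows from $u\lunder a$, and $c\lmeet\oneg{b}=c\lmeet\oneg{d}=\oneg{u}$ is de Morgan applied to \BCACC --- but the four products $u\lmeet\oneg{b}$, $u\lmeet\oneg{d}$ and their de Morgan duals $b\ljoin\oneg{u}$, $d\ljoin\oneg{u}$ do not. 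For your set to be a subalgebra with the order of \minC you would need $(b\ljoin\oneg{c})\lmeet\oneg{b}=\oneg{c}$ and $(d\ljoin\oneg{c})\lmeet\oneg{d}=\oneg{c}$, which are precisely instances of the dual short form of orthomodularity (dsOMr) for the comparable pairs $\oneg{c}\lunder\oneg{b}$ and $\oneg{c}\lunder\oneg{d}$; these do not follow from the three premises of \cmtp in a general ortholattice (if they did, your reduction scheme would derive orthomodular identities in non-orthomodular lattices). The most the premises give is $u\lmeet\oneg{d}\lmeet c=d\lmeet\oneg{d}=\lbot$, and passing from that to $u\lmeet\oneg{d}\lunder\oneg{c}$ is exactly the rule \wllj, which the paper shows is falsified already by cones. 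So the ten-element set may fail to be a sublattice, and the copy of \minC must be located elsewhere.

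This is also where your route diverges from the paper's. The paper does not attempt a closure computation on a hand-picked subset; it forms the free ortholattice over the partial order given by the premises together with the failed conclusion $a\lmeet(\oneg{a}\ljoin b)=d\lunderp\hspace{-0.2em}\not=b$, derives essentially the same order relations and non-collapses you list, and then uses the universal property to map this free object homomorphically into $L$, arguing that any image keeping $a,b,c,d$ distinct yields \minC. That absorbs the closure problem into the structure of the free object rather than into identities that must hold in $L$. To repair your direct approach you would have to either prove the two missing identities from the premises (which I do not believe is possible) or further massage the generators --- for instance replacing $\oneg{c}$ by $u\lmeet\oneg{b}$ --- and then re-verify all premises, distinctness, and closure for the new configuration, taking care that the two candidate replacements $u\lmeet\oneg{b}$ and $u\lmeet\oneg{d}$ do not themselves produce an eleventh element.
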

\begin{proofsketch}
	The idea is to  create the most general lattice fulfilling all orthologic axioms and contains an order $R$ of elements according to the premisses of \cmtp, but not fulfilling the conclusion. 
	Technically, one constructs a free lattice \cite[p.76]{graetzer11lattice} w.r.t.\ the variety of ortholattices and order $R$, i.e., a lattice  which is known to have the universal property: it maps homomorphically into each ortholattice containing $R$.  
\end{proofsketch}
So, we know that \cmtp is a non-trivial rule and that the logic $\pomLogic$ is a real extension of Omin.  What can we say about $\pomLogic$? We give here characterization of $\pomLogic$ along the line of \cite{goldblatt74semantic}. One of the chief results of \cite{goldblatt74semantic} is to show that orthoframes are the adequate structures to define a semantics for which all orthologics are complete. This result is strengthened for the specific case of \emph{orthomodular}  logics, that is logics that extend $Omin$ with an orthomodularity rule (also called \emph{quantum logics}). Goldblatt describes the orthomodularity rule in  a compact form as an axiom
$A \gbund (\gbnicht A \gboder (A \gbund B)) \vdash B$. 
Instead, we use the following equivalent version which fits to the definition of orthomodularity in dual short form (dsOMr). 

\begin{description}
\item[(OM)] If $B \vdash A$, then  $A \gbund (\gbnicht A \gboder B) \vdash B$
\end{description}

We use the notion of relative closure of Goldblatt: $X$ is  \emph{$\bot$-closed in $Z$}  iff: when $x \notin X$ then there is some $y \in Z$ such that  $y \bot X$ but not $x \bot y$.  

Adapting the notion of a quantum frame \cite{goldblatt74semantic} we define   \emph{c-frames} $(X, \bot, \xi)$.

\begin{definition}\label{def:cframe} A \emph{c-frame}  $(X, \bot, \xi)$ 
is a structure where $(X, \bot)$ is an orthoframe and where $\xi$ is a collection of $\bot$-closed subsets  $X$ such that:
\begin{enumerate}
\item $\xi$ is closed under set intersection and the $\bot$-induced polarity operator $Y^* = \{x \mid \text{forall y } \in Y: x \bot y\}$
\item\label{rulepartTwoCframe} For all $a,b,c \in \xi$: If 
\begin{inparaenum}[(i)]
\item $b \subseteq a$,  
\item $c^* \subseteq a \cap b^*$, 
\item  $b^* \cap c \subseteq (a \cap c)^*$, and 
\item $(a \cap c)^* \cap c \subseteq b^*$, 
\end{inparaenum} 
then $b$ ist $\bot$-closed in $a \cap c$. 
\end{enumerate} 
A \emph{c-model} $(X, \bot, \xi, (\cdot)^{\myI})$ is an orthomodel with c-frame  $(X, \bot, \xi)$ and assigning to each $P_{i}$ a set from $\xi$, $(P_{i})^{\myI} \in \xi$.   
\end{definition}

Now, using the same line of reasoning as that of \cite{goldblatt74semantic} for minimal quantum logic (minimal orthologic + orthomodularity rule) one can show that any orthologic $\mathcal{L}_{\cmtp} \supseteq \pomLogic$ that contains the rule \cmtp is  captured semantically by orthomodels over c-frames. 

First we establish a canonical c-model  $\myI_{\cmtp}$. For any formula $A$ and logic $\mathcal{L}$ let $f_{\mathcal{L}}(A)$ be the set of all $\mathcal{L}$-full sets containing $A$.  
The canonical model $\myI_{\cmtp} = (X_{\cmtp}, \bot_{\myI_{\cmtp}}, \xi, (\cdot)^{\myI_{\cmtp}})$ consists of the set  $X_{\cmtp}$  of $\mathcal{L}_{\cmtp}$-full sets  and  the orthogonality relation is defined by $Y \bot_{\myI_{\cmtp}} Z$ iff there is $A \in Y$ and $\gbnicht A \in Z$. The set $\xi$ is  $\xi = \{f_{\mathcal{L}_{\cmtp}}(A) \mid A \in Fml(P)\}$ and the assignment function is $(P_{i})^{\myI_{\cmtp}} = f_{\mathcal{L}_{\cmtp}}(P_{i})$.   

\begin{proposition}\label{prop:myIcMTPIsacModel}
 $\myI_{\cmtp}$ is a c-model.
\end{proposition}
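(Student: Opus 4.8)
The plan is to adapt Goldblatt's completeness argument for minimal quantum logic, substituting the rule \cmtp for the orthomodularity rule. First I would assemble the elementary facts about the canonical construction that already underlie Proposition~\ref{prop:canonicOrhtoModelCaptures} and Proposition~\ref{prop:minimalModelCompletenessAndSoundness}. Abbreviating $f(A) := f_{\mathcal{L}_{\cmtp}}(A)$, one has, for all formulae $A,B$: $f(A) = (A)^{\myI_{\cmtp}}$; $f(A\gbund B) = f(A)\cap f(B)$; $f(\gbnicht A) = f(A)^{*}$, hence $f(A)^{**} = f(\gbnicht\gbnicht A) = f(A)$; and $f(A)\subseteq f(B)$ iff $A\vdash_{\mathcal{L}_{\cmtp}} B$. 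Moreover $\bot_{\myI_{\cmtp}}$ is irreflexive (an $\mathcal{L}_{\cmtp}$-full set containing both $A$ and $\gbnicht A$ would contain $A\gbund\gbnicht A$ and hence every formula, contradicting consistency) and symmetric (if $A\in Y$ and $\gbnicht A\in Z$, then $\gbnicht A\in Z$ and $\gbnicht\gbnicht A\in Y$), so $(X_{\cmtp},\bot_{\myI_{\cmtp}})$ is an orthoframe. All of this is proved exactly as in \cite{goldblatt74semantic}, using that $\mathcal{L}_{\cmtp}$-full sets are consistent, deductively closed and $\gbund$-closed, together with the Lindenbaum-style extension lemma that every consistent, $\gbund$-closed, deductively closed set of formulae extends to an $\mathcal{L}_{\cmtp}$-full set.

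The routine clauses of Definition~\ref{def:cframe} then fall out immediately. $(X_{\cmtp},\bot_{\myI_{\cmtp}})$ is an orthoframe by the above; $\xi = \{f(A)\mid A\in Fml(P)\}$ is closed under intersection, since $f(A)\cap f(B) = f(A\gbund B)\in\xi$, and under the polarity operator, since $f(A)^{*} = f(\gbnicht A)\in\xi$, which is clause~1; and every $(P_{i})^{\myI_{\cmtp}} = f(P_{i})$ is a $\bot$-closed set belonging to $\xi$ (as $f(P_{i})^{**} = f(P_{i})$), so $\myI_{\cmtp}$ is indeed an orthomodel assigning $\xi$-sets to the propositional variables. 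Everything thus reduces to verifying clause~2.

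So fix $a = f(A)$, $b = f(B)$, $c = f(C)$ in $\xi$ and suppose hypotheses (i)--(iv) hold; I would read each of them off as a derivability statement of $\mathcal{L}_{\cmtp}$ using the identities above. Hypothesis (i), $b\subseteq a$, is $B\vdash A$, i.e.\ \AteilB. Hypothesis (ii), $c^{*}\subseteq a\cap b^{*}$, becomes $\gbnicht C\vdash A\gbund\gbnicht B$, which by de Morgan and double negation elimination is equivalent to $\gbnicht A\gboder B\vdash C$, i.e.\ \nichtABunterC. Hypotheses (iii) and (iv), after passing to orthocomplements, become $A\gbund C\vdash B\gboder\gbnicht C$ and $B\vdash(A\gbund C)\gboder\gbnicht C$ respectively, which are exactly the two inclusions making up \BCACC, $B\gboder\gbnicht C\dashv\vdash(A\gbund C)\gboder\gbnicht C$. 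Hence $A,B,C$ satisfy precisely the premises of the rule \cmtp, and since \cmtp is a rule of $\mathcal{L}_{\cmtp}$, applying it gives its conclusion \ABconclusion: $A\gbund(\gbnicht A\gboder B)\vdash B$. It remains only to turn \ABconclusion back into the assertion that $b$ is $\bot$-closed in $a\cap c$; unwinding that relative-closedness condition through the definition of the polarity operator and re-expressing the sets involved via the $f(\cdot)$-identities, it collapses to exactly the nontrivial direction of \ABconclusion. This establishes clause~2, so $\myI_{\cmtp}$ is a c-model.

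The main obstacle is this final correspondence: one must match the purely relational notion ``$b$ is $\bot$-closed in $a\cap c$'' with the algebraic content of \ABconclusion. Concretely, to establish the nontrivial inclusion one takes an $x$ lying outside $b$ (relative to $a\cap c$) and must produce a witness $y\in a\cap c$ with $y\bot b$ and $\lnot(x\bot y)$; exactly as in Goldblatt's treatment of minimal quantum logic, such a $y$ is obtained by choosing a formula that $x$ omits but that is forced by $B$ in the presence of $A$ and $C$, forming the consistent $\gbund$-closed deductively closed set it generates, and invoking the extension lemma --- and it is precisely \ABconclusion that guarantees consistency of that set. Keeping the bookkeeping straight here (which of the formulae $A\gbund C$, $\gbnicht(A\gbund C)\gboder B$, and so on the witness must contain) is the delicate point; the remainder is the standard Lindenbaum--Goldblatt boilerplate.
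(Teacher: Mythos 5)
Your proposal is correct and follows essentially the same route as the paper's own proof: translate clause~2 of Definition~\ref{def:cframe} into the premises of \cmtp via the canonical-model correspondence of Proposition~\ref{prop:canonicOrhtoModelCaptures}, apply the rule, and translate the conclusion back. You are in fact somewhat more explicit than the paper about the final step (extracting relative $\bot$-closedness of $b$ in $a\cap c$ from the derived inclusion via a Goldblatt-style witness from the extension lemma), which the paper's proof simply asserts.
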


\begin{theorem}\label{thm:pomRepresentation}
$\Gamma \vdash_{\mathcal{L}_{\cmtp}} A$ iff $\Gamma \vDash_{c-frame} A$
\end{theorem}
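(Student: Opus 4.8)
The plan is to reproduce, for the rule \cmtp, the soundness-and-completeness argument that Goldblatt gives for minimal quantum logic (Propositions~\ref{prop:canonicOrhtoModelCaptures} and~\ref{prop:minimalModelCompletenessAndSoundness}), so that the equivalence splits into two directions.

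\emph{Soundness ($\Rightarrow$).} Since every axiom and rule of $Omin$ is already valid in every orthomodel, it suffices to show that \cmtp is valid in every c-model. Given a c-model with $(A)^{\myI}=a$, $(B)^{\myI}=b$, $(C)^{\myI}=c$ (all in $\xi$), I would expand the evaluation clauses for $\gbund$ and $\gbnicht$ and, via de Morgan, for $\gboder$, using that $a,b,c$ are $\bot$-closed so that $(\cdot)^{**}$ is the identity on them. A short computation then shows that the premisses \AteilB, \nichtABunterC\ and \BCACC\ assert, respectively, $b\subseteq a$; $c^{*}\subseteq a\cap b^{*}$; and the pair $b^{*}\cap c\subseteq(a\cap c)^{*}$ together with $(a\cap c)^{*}\cap c\subseteq b^{*}$ --- exactly conditions (i)--(iv) of part~(\ref{rulepartTwoCframe}) of Definition~\ref{def:cframe}. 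Hence $b$ is $\bot$-closed in $a\cap c$, i.e.\ $(a\cap c)\cap((a\cap c)\cap b^{*})^{*}=b$; what is left is to deduce from this, together with (i)--(iv), the inclusion $a\cap(a\cap b^{*})^{*}\subseteq b$, which is the set-theoretic content of the conclusion \ABconclusion. (For an orthologic $\mathcal{L}_{\cmtp}$ properly extending $\pomLogic$ one likewise restricts $\vDash_{c-frame}$ to the c-models that validate $\mathcal{L}_{\cmtp}$.)

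\emph{Completeness ($\Leftarrow$).} Here I would use the canonical c-model $\myI_{\cmtp}$ over the $\mathcal{L}_{\cmtp}$-full sets, which is a c-model by Proposition~\ref{prop:myIcMTPIsacModel}. Since the orthoframe and the assignment to proposition symbols underlying $\myI_{\cmtp}$ coincide with those of Goldblatt's canonical orthomodel $\myI_{\mathcal{L}_{\cmtp}}$, and the extra component $\xi$ never enters the evaluation of formulae, the truth lemma $(A)^{\myI_{\cmtp}}=f_{\mathcal{L}_{\cmtp}}(A)$ propagates from proposition symbols to all formulae exactly as in the proof of Proposition~\ref{prop:canonicOrhtoModelCaptures}; consequently $\Gamma\vdash_{\mathcal{L}_{\cmtp}}A$ iff $\myI_{\cmtp}:\Gamma\models A$. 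If $\Gamma\not\vdash_{\mathcal{L}_{\cmtp}}A$ then $\myI_{\cmtp}$ is a c-model separating $\Gamma$ from $A$, whence $\Gamma\not\vDash_{c-frame}A$; the converse $\Gamma\vdash_{\mathcal{L}_{\cmtp}}A\Rightarrow\Gamma\vDash_{c-frame}A$ is just soundness. Together the two directions give the theorem.

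I expect the only real obstacle to be the last step of the soundness direction: passing from relative $\bot$-closedness of $b$ in $a\cap c$ to the unrelativised inclusion $a\cap(a\cap b^{*})^{*}\subseteq b$. This is the orthoframe counterpart of the geometric reasoning behind Theorem~\ref{thm:PomHoldsInAllCones}, and it is precisely where the premisses of \cmtp that go beyond plain orthomodularity are needed; I would first read off $b\subseteq c$ and $a^{*}\subseteq c$ from condition (ii), and then transport the relative biorthogonal of $b$ computed inside $a\cap c$ up to $a$ along the perspectivity witnessed by conditions (iii)--(iv), by an element chase in the orthogonality relation. The remaining ingredients --- the canonical-model construction and the Lindenbaum--Tarski correspondence --- are routine adaptations of the material recalled in Section~\ref{subsec:orthologicsPreliminaries}.
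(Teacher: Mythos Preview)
Your plan coincides with the paper's proof: soundness by induction on derivations with the only new case being \cmtp, handled by translating the premisses into conditions (i)--(iv) of Definition~\ref{def:cframe} and invoking relative $\bot$-closedness; completeness via the canonical c-model (Proposition~\ref{prop:myIcMTPIsacModel}) together with Proposition~\ref{prop:canonicOrhtoModelCaptures}. The step you flag as the obstacle is more direct than you anticipate and does not require conditions (iii)--(iv) a second time: from (ii) one has $(a\cap b^{*})^{*}\subseteq c$, so any $x\in a\cap(a\cap b^{*})^{*}$ with $x\notin b$ already lies in $a\cap c$; relative $\bot$-closedness of $b$ in $a\cap c$ then yields some $y\in(a\cap c)\cap b^{*}\subseteq a\cap b^{*}$ with $\neg(x\bot y)$, which contradicts $x\in(a\cap b^{*})^{*}$.
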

\begin{proofsketch} The soundness part works by induction on the length of the derivation. 
The completeness part uses Proposition \ref{prop:myIcMTPIsacModel}. 
\end{proofsketch}
%

\section{Related Work\label{sect:relatedWork}}
The logic $\pomLogic$ is a non-distributive propositional logic with an orthonegation. Non-distributive logics are investigated thoroughly in \cite{hartonas16reasoning}. The semantical models considered there are of an abstract kind and not geometrically motivated. Hartonas also considers various other forms of negation, adapting Dunn's kite of negation \cite{dunn96generalized,dunn05negation} to the non-distributive setting.      

We motivated $\pomLogic$  also as a means to investigate knowledge graph embeddings. Highly related work here is that of \cite{gutierrez-basulto18from} and \cite{kulmanov19ele}. Both approaches are orthogonal to ours as they  consider not only unary relations (concepts) but also binary relations \cite{kulmanov19ele}---as required for embeddings of knowledge graphs---or even arbitrary n-ary relations \cite{gutierrez-basulto18from}. But let us note that the general idea underlying cone logic applies also to non-propositional logics, though we did not deal with these in this paper (see our \cite{oezcep20cone,leemhuis20multi-label}).  On the other hand, cone logic provides a full orthonegation---which is not the case for \cite{gutierrez-basulto18from} or for \cite{kulmanov19ele}.     

\section{Conclusion}\label{sect:conclusion}
Investigating the properties of closed cones over  propositional signatures leads to  nontrivial non-distributive propositional logics with an orthonegation.  A case in point is $\pomLogic$  as discussed in this paper. This logic adds to minimal orthologic  further  inferences for scenarios where only partial information on states is available. 
$\pomLogic$ is also interesting for the prospects it offers on connecting KR with machine learning (see \cite{oezcep20cone,leemhuis20multi-label}). 

 $\pomLogic$ is not the end of the story as there are still ortholattices that fulfil $\cmtp$ but are not representable by cones in some $\mathbb{R}^n$. Nonetheless, $\pomLogic$ is a condensation point for ongoing work which leads to stronger rules that seem to provide the necessary bits of constraints in order to characterize cones in the same way as the orthomodularity rule in minimal quantum logic helps characterizing subspaces in Hilbert Spaces.  
 
 \bibliographystyle{splncs04}
 \bibliography{referencesAI2020}
 \appendix
\section{Proofs and Further Results}\label{sect:proofs}

\subsection{Proof of Theorem \ref{thm:PomHoldsInAllCones}}
To prove that all cones fulfill the generalized orthomodularity rule we first give a small lemma. 
The scalar product can be used to measure angles. 
Intuitively, for three vectors $x, y, z$ it holds that $x$ does not belong to the cone generated by $y, z$, if the angle $\angle_x^z$ is larger than $\angle_y^z$.

\begin{lemma}\label{lem:cone-angle}
Let $x, y, z \in \mathbb{R}^n$ with $||x|| = ||y|| = ||z|| = 1$. 
Then:  $\langle x,z \rangle < \langle y,z \rangle$ implies 
$x \not \in \conicHull{\{y,z\}}$. 
\end{lemma}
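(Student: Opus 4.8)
The plan is to establish the contrapositive: assuming $x \in \conicHull{\{y,z\}}$, I will show $\langle x,z\rangle \geq \langle y,z\rangle$, which rules out $\langle x,z\rangle < \langle y,z\rangle$. Since $x$ is a nonnegative combination $x = \lambda y + \mu z$ with $\lambda,\mu\ge 0$, it lies in the linear subspace $\Pi = \mathrm{span}\{y,z\}$, so the whole argument can be carried out inside $\Pi$. First I would dispose of the degenerate case $\langle y,z\rangle \in \{-1,+1\}$, i.e.\ $y = \pm z$: then $\Pi$ is a line, the only unit vectors in it are $\pm z$, and checking the few possibilities for $x$ directly gives $\langle x,z\rangle \ge \langle y,z\rangle$.

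For the main case $y$ and $z$ are linearly independent, so $\Pi$ is a genuine plane. I would fix an orthonormal basis $(e_1,e_2)$ of $\Pi$ with $e_1 = z$, choosing the sign of $e_2$ so that the second coordinate of $y$ is nonnegative; then $z = (\cos 0,\sin 0)$ and $y = (\cos\theta_0,\sin\theta_0)$ for a unique $\theta_0 \in (0,\pi)$ with $\cos\theta_0 = \langle y,z\rangle$. Writing $x = \lambda y + \mu z = (\lambda\cos\theta_0 + \mu,\ \lambda\sin\theta_0)$ shows that $x$ has second coordinate $\lambda\sin\theta_0 \ge 0$, so, being a unit vector, $x = (\cos\psi,\sin\psi)$ for a unique $\psi \in [0,\pi]$, with $\langle x,z\rangle = \cos\psi$. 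The key computation is then
\[
\sin(\theta_0 - \psi) \;=\; \sin\theta_0\cos\psi - \cos\theta_0\sin\psi \;=\; \mu\sin\theta_0 \;\ge\; 0 ,
\]
and since $\theta_0 - \psi \in (-\pi,\pi)$ this forces $\theta_0 - \psi \in [0,\pi)$, i.e.\ $\psi \le \theta_0$. As cosine is strictly decreasing on $[0,\pi]$, we conclude $\langle x,z\rangle = \cos\psi \ge \cos\theta_0 = \langle y,z\rangle$, which finishes the proof.

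An alternative, trigonometry-free route uses only the norm constraint: expanding $1 = \|x\|^2$ gives $\lambda^2 + 2\lambda\mu\langle y,z\rangle + \mu^2 = 1$, and subtracting this from $\langle x,z\rangle^2 = (\lambda\langle y,z\rangle + \mu)^2$ yields the identity $\langle x,z\rangle^2 = 1 - \lambda^2\bigl(1 - \langle y,z\rangle^2\bigr)$; combined with the observation that $\lambda > 1$ can occur only when $\langle y,z\rangle < 0$ (again read off the norm identity), a short sign analysis gives the claim. The step I expect to be the real obstacle is precisely this sign bookkeeping in the obtuse case $\langle y,z\rangle < 0$, where $\langle x,z\rangle$ may itself be negative: a bound on $|\langle x,z\rangle|$ alone is then not enough, and one must control on which side of $0$ it lies. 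The geometric argument above handles this uniformly, at the cost of the (routine) reduction to a $2$-plane and the single check that the angle $\psi$ does not ``wrap past'' $\theta_0$.
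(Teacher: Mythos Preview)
The paper does not actually prove this lemma; it only records the angle intuition (``$x$ does not belong to the cone generated by $y,z$ if the angle $\angle_x^z$ is larger than $\angle_y^z$'') and moves on. Your argument is correct and is precisely a clean formalisation of that intuition: reduce to the plane $\mathrm{span}\{y,z\}$, parametrise unit vectors by their angle with $z$, and check via $\sin(\theta_0-\psi)=\mu\sin\theta_0\ge 0$ that a nonnegative combination cannot push the angle past $\theta_0$.
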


For every cone $\conicHull{X}$ there exists a family $x_I$ of vectors that generates the cone, i.e., $x \in \conicHull{X}$ iff  $x=\sum_{i\in I} \lambda_i x_i$, with $\lambda_{i} > 0$. 
Assume that the premises of \cmtp hold.
We have to show the existence of some $x \in \aCone \cap \conicHull{\aCone^\circ \cup \bCone}$ with $x\in \aCone\setminus \bCone$ contradicts the premisses of \cmtp. 


Let $X = \aCone \cap \conicHull{\aCone^\circ \cup \bCone} \setminus \bCone$.
Every $x\in X$ can be written as $x=\lambda_x x_a^{\circ} + \mu_x x_b$ with $x_a^{\circ} \in \aCone^\circ$, $x_b \in \bCone$ and $\lambda_x, \mu_x > 0$.
Select any $x\in X$ that maximizes $\frac{\langle x, x_a^{\circ}\rangle}{||x|| \cdot ||x_a^{\circ}||}$.
Since $x\in \aCone$, $x_a^{\circ} \in \aCone^\circ$ it follows that $\langle x, x_a^{\circ}\rangle \leq 0$. 

Case I: $\langle x,x_a^{\circ}\rangle=0$, which intuitively means that there exists an $x \in a$ on the border of $a$ that does not belong to $b$ and is thus located between $\aCone^\circ$ and $\bCone$.
Thus, $\langle x_b, x_a^{\circ}\rangle < 0$ and since $\bCone\subseteq \cCone$ and $\aCone^\circ \subseteq \cCone$ due to \nichtABunterC, it holds $\langle x_b, x_c^{\circ} \rangle > \langle x, x_c^{\circ} \rangle$ for all $x_c^{\circ} \in c^\circ$. 
Therefore, $x\not\in \conicHull{\bCone \cup \cCone^\circ}$ but $x\in \aCone\cap \cCone$, contradicting \BCACC.

Case II: $\langle x,x_a^{\circ}\rangle<0$ in conjunction with \nichtABunterC means that 
$\exists x_{b}\in \bCone : \langle x_{b}, x_a^{\circ}\rangle=0$.
Therefore, $x\not \in \conicHull{\aCone^\circ \cup \bCone}$ which contradicts its construction.
%

\subsection{Proof of Proposition \ref{prop:cMTPNotderivable}}
We have to verify that  MC$_{8}$ falsifies \cmtp. 
We have $a \lmeet c = d$. The premisses of \cmtp are fulfilled, because:  
We have $b \lunder  a$  (premise: $B \vdash A$),  $\oneg{a} \ljoin b = c$ (premise $\gbnicht A \gboder B \vdash C$)     and $b \ljoin \oneg{c} = a = (a \lmeet c) \ljoin \oneg{c}$  
(premise: $ B \gboder \gbnicht C \dashv \vdash (A \gbund C) \gboder \gbnicht C$). 
But the conclusion does not hold as $a \lmeet (\oneg{a} \ljoin b) = a \lmeet c = d \not \lunder b$.  

We want to note that  that MC$_{8}$ is not a lattice appearing only in artificial toy examples  but also in some well-investigated class of ortholattices.  
There is an ortholattice in the form of the extended permutohedron $R(B_{2})$  \cite[p.\ 186]{santocanale14extended}  that does contains MC$_{8}$ and hence does not fulfill \cmtp  (see Fig. \ref{fig:minimalCounterxamplecMTP} with our own labellings). 
\begin{figure}
		\begin{tikzpicture}[xscale=0.5, yscale = 0.7]
		\tikzset{
vertex/.style= {circle,draw=black,fill=black,scale=0.5, thick}
}
\tikzset{hellekante/.style = {color = gray}} 
		\node (bottom) at (4,0){$\lbot$};
		\node (a1) at (1,1){$\oneg{a}$} edge (bottom);  
		\node[vertex] (b0) at (3,1){} edge (bottom); 
		\node (b1) at (5,1){$ \oneg{c}$} edge (bottom); 
		\node[vertex] (a0) at (7,1){} edge (bottom);
		\node[vertex] (c01not) at (0,2){} edge (a1) edge (b0);
		\node (c00not) at (1.4,2){$\oneg{d}$} edge (a1) edge (b1); 
		\node (c11not) at (6.6,2){$b$} edge (b0) edge (a0); 
		\node[vertex] (c10not) at (8,2){} edge (b1) edge (a0);
		\node[vertex] (unot) at (3,3){} edge (b0) edge (b1);
		\node[vertex] (u) at (5,3){} edge[hellekante] (a1) edge[hellekante] (a0);
		\node[vertex] (c10) at (0,4){} edge (c01not);
		\node (c11) at (1.4,4){$\oneg{b}$} edge[hellekante] (c00not); 
		\node (c00) at (6.6,4){$d$} edge[hellekante] (c11not); 
		\node[vertex] (c01) at (8,4){} edge (c10not);	
		\node[vertex] (a0not) at (1,5){} edge (c10) edge[hellekante] (c11) edge[hellekante] (unot);  
		\node (b1not) at (3,5){$c$} edge[hellekante] (c10) edge[hellekante] (u) edge[hellekante] (c00);  
		\node[vertex] (b0not) at (5,5){} edge[hellekante] (c11) edge[hellekante] (u) edge[hellekante] (c01); 
		\node (a1not) at (7,5){$a$} edge (unot) edge[hellekante] (c00) edge (c01); 
		\node (top) at (4,6){$\ltop$} edge (a0not) edge[hellekante] (b1not) edge[hellekante] (b0not)  edge (a1not);		
		\end{tikzpicture}
\hfill
		\begin{tikzpicture}[xscale=0.35, yscale = 0.4]
		\node (top) at (3,10) {$\ltop$};
		\node (C) at (0,7) {$c$} edge (top); 
		\node (nB) at (3,8) {$\oneg{b}$} edge (top); 
		\node (A) at (6,7) {$a$} edge (top); 
		\node (nX) at (3,6) {$\oneg{d}$} edge (nB); 
		\node (nA) at (0,3) {$\oneg{a}$} edge (nX) edge (C); 
		\node (X) at (3,4) {$d$} edge (C) edge (A); 
		\node (nC) at (6,3) {$\oneg{c}$} edge (nX) edge (A); 
		\node (B) at (3,2) {$b$} edge (X); 
		\node (bot) at (3,0) {$\lbot$} edge (nA) edge (B) edge (nC);    
		\end{tikzpicture}
\caption{Left: Ortholattice $R(B_{2})$ Right: Minimal counter example \minC for \cmtp\label{fig:minimalCounterxamplecMTP}}
\end{figure}
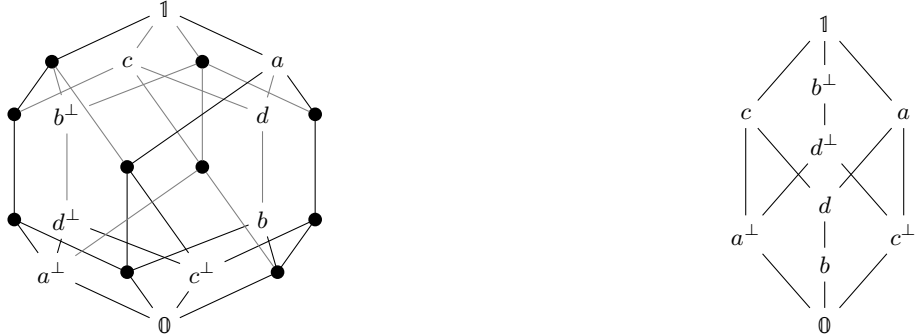

$R(B_{2})$ exemplifies a general construction based on a given transitive binary relation $r$ on some set $D$. The transitive closures of open (i.e., co-transitive in $r$) sets form an ortholattice  called the extended permutohedron. The extended permutohedron of Fig. \ref{fig:minimalCounterxamplecMTP} is induced by the transitive relation over the four-element, diamond-shaped lattice $B_{2}$.

\subsection{Proof of Theorem \ref{thm:forbiddensubalgebra}}
	 Let $L$  fulfill \cmtp. We have to show that $L$ excludes \minC. But we have shown exactly the contraposition for the  permutohedron $R(B_{2})$, in which we argued just with the elements of  \minC.
	 
	Let $L$  not fulfill \cmtp. We have to show that it must contain \minC.
	The conclusion $a \lmeet (\oneg{a} \ljoin b) \not = b$ is only possible when there is a $d$ such that 
	\begin{equation} \label{ABconc2}
	a \lmeet (\oneg{a} \ljoin b)  = d
	\end{equation} and $b < d$, because $b$ is contained in $\oneg{a} \ljoin b$ and because of \AteilB also in $a  \lmeet (\oneg{a} \ljoin b)$.\\
	Now we create the most general lattice that fulfills all orthologic axioms and contains an order $R$ of elements according to the premises of \cmtp as well as  (\ref{ABconc2}). Technically, we construct a free lattice \cite[p.76]{graetzer11lattice} w.r.t.\ the variety of ortholattices and order $R$, i.e., a lattice  which is known to have the universal property: it maps homomorphically into each ortholattice containing $R$.  
	The relations between the lattice elements are as follows:
	\begin{itemize}
		\item $\lbot < b$ (if $b=\lbot$, then $a \lmeet (\oneg{a} \ljoin b) = a \lmeet (\oneg{a} \ljoin \lbot) = a \lmeet \oneg{a} = \lbot = b$: contradiction)
		
		\item $\lbot < \oneg{a}$ (if $\oneg{a} = \lbot$ then $a \lmeet (\oneg{a} \ljoin b) = \ltop \lmeet (\lbot \ljoin b) = b$:  contradiction)
		
		\item $d \leq a \lmeet c$ (with $d = a \lmeet (\oneg{a} \ljoin b)$ (\ref{ABconc2}) and $\oneg{a} \ljoin b \leq c$)
		
		\item $\lbot < \oneg{c}$ (with \BCACC and $\oneg{c} = \lbot$ follows that $b = (a \lmeet c)$, but $d \leq a \lmeet c$, thus $d = b$: contradiction)

		\item $b \lmeet \oneg{a} \leq \lbot$ (with $b  \leq a$ \AteilB)
		
		\item  $b \lmeet \oneg{c} \leq \lbot$ (with 
		\nichtABunterC)
		
		
		
		\item $ \oneg{c} \leq a$ (with \nichtABunterC and ortholattice rule)
		
		
		\item 	$a \lmeet c \leq b \ljoin \oneg{c}$ (with \BCACC)
		
		\item $b \ljoin \oneg{c} \leq a$ (with $b \leq a$ \AteilB and $ \oneg{c} \leq a$)
		
		\item $d \leq \oneg{a} \ljoin b$ (with (\ref{ABconc2}))
		
		\item $\oneg{a} \ljoin b \leq c$ (with \nichtABunterC)
	\end{itemize}	
	The other ortholattice order relations follow from the orthologic calculus.
	
	The next step is to find the distinct elements. If in a homomorphic image two of them are identified, then \cmtp would be fulfilled. 
	It is sufficient to show only one case of $x \leq y$ and $\oneg{y} \leq \oneg{x}$---the other one follows from the definition of orthocomplement. 
	The distinct elements are:
	\begin{itemize}
		\item $\lbot < b$, $\lbot < \oneg{a}$, $\lbot < \oneg{c}$ (see above)
		
		\item $\oneg{a} < b \ljoin \oneg{a}$ (with $b \leq a$ \AteilB and $a,b \not = \lbot$)
		\item  $\oneg{c} < b \ljoin \oneg{c}$ (analogue)
		
		
		\item $a \lmeet c < c$ (with \nichtABunterC we have  $\oneg{a} \leq c$ and $\oneg{a} \not = \lbot$)
		
		\item $a \lmeet c < b \ljoin \oneg{c}$ ($a \lmeet c = b \ljoin \oneg{c}$ only possible, if $\oneg{c}  = \lbot$ or $\oneg{c} \leq a \lmeet c$ (else \BCACC is not fulfilled) both not possible)
	\end{itemize}
	A homomorphic image of the free lattice, where the distinct elements $a,b,c,d$ are not identified, results in \minC.
	Thus every ortholattice which does not fulfill \cmtp must contain \minC.

\subsection{Proof of Proposition \ref{prop:myIcMTPIsacModel}}
As in \cite{goldblatt74semantic} one checks that   $\bot_{\myI_{\cmtp}}$  is indeed an orthogonality relation 
and that the sets in $X_{\cmtp}$---due to being $\mathcal{L}_{\cmtp}$-full---are also $\bot_{\myI_{\cmtp}}$-closed.  We have to show that $\xi$ fulfils the conditions of a c-frame. 
All elements in $X_{\cmtp}$ are closed under intersection and the operator $\cdot^*$ induced by $\bot_{\myI_{\cmtp}}$ because $\xi$ contains assignments $(\cdot)^{\myI_{\cmtp}}$ for all formulae (wich are closed under $\gbund$ and $\gbnicht$).  So we have to show that   Def. \ref{def:cframe}.\ref{rulepartTwoCframe} holds. But this is also clear because each sub-condition defined for the sets in $\xi$  directly translates to a corresponding premise in the rule \cmtp. For example, let $b, a \in \xi$ and $b \subseteq a$. Assume that $b = (B)^{\myI_{\cmtp}}$, so the set $b$ is the denotation of the formula $B$ and similarly $a = (A)^{\myI_{\cmtp}}$. Then according to Prop.\  \ref{prop:canonicOrhtoModelCaptures}  this means that $ B \vdash_{\mathcal{L}_{\cmtp}} A$. Similarly one derives that the other premises must hold and so the conclusion  $A \gbund (\gbnicht A \gboder B) \vdash_{\mathcal{L}_{\cmtp}} B$ must hold. Using again Prop. \ \ref{prop:canonicOrhtoModelCaptures} this means that   $(A \gbund (\gbnicht A \gboder B))^{\myI_{\cmtp}} \subseteq (B)^{\myI_{\cmtp}}$. 
Therefore,  $B$ is $\bot_{\myI_{\cmtp}}$-closed. 

\subsection{Proof of Theorem \ref{thm:pomRepresentation}}
``$\rightarrow$'' (soundness):  This works by induction on the length of the derivation considering each rule case by case.  The cases for 
the  rules of $O_{min}$ are clear (see Goldblatt's soundness proof for $O_{min}$ \cite[p. 26]{goldblatt74semantic}).  The only new part is the rule \cmtp. So assume that $\myI$ is a c-model making the premises of \cmtp true. But this means actually due to Def. \ref{def:cframe}.\ref{rulepartTwoCframe} that $(B)^{\myI}$ is $\bot$-closed in $(A)^{\myI} \cap (C)^{\myI}$. If the conclusion of \cmtp would not hold then we would have an  $x$ such that $x \in (A \gbund (\gbnicht A \gboder B))^{\myI}$ from which it follows that $x \in (A \gbund C )^{\myI}$  and $x \notin (B)^{\myI}$.  From the $\bot$-closure of $(B)^{\myI}$ in $(A)^{\myI} \cap (C)^{\myI}$ we know actually that there must be some $y \in A$ such that $y \in     ((B)^{\myI})^*$ and not $x \bot y$.  But as  $x \in (A \gbund \gbnicht B)^{\myI}$ we have in particular $x \in (\gbnicht B)^{\myI}$ so $x \bot y$, which gives a contradiction. 

``$\leftarrow$'' (completeness): 
Let $\Gamma \not\vdash_{\mathcal{L}_{\cmtp}} A$. According to Prop.\  \ref{prop:myIcMTPIsacModel},  structure $\myI_{\mathcal{L}_{\cmtp}}$ is a c-model. Moreover,  $\myI_{\mathcal{L}_{\cmtp}}$ is a canonical model of $\mathcal{L}_{\cmtp}$ and so we can use  Prop.\ \ref{prop:canonicOrhtoModelCaptures} to infer that $\myI_{\cmtp}: \Gamma \not\vDash A$ which entails that the desired relation  $\Gamma \not\vDash_{c-frame} A$ holds.

\subsection{Towards Reintroducing (Weakenings) of Distributivity }
In this section we are going to discuss how to change the cone configuration lattices in order to regain (weakenings) of distributivity. 
We first discuss how to regain distributivity for cones by considering specific alignments of cones. Then we consider the case where  full $C_{n}$ (not arbitrary sublattices) are considered and discuss necessary and sufficient conditions for M-symmetry.   

\subsubsection{Cone Configuration for Distributivity and Al-Cones} 
The lattice induced by cones can be restricted to distributivity by considering only \emph{axis-aligned cones} \cite{oezcep20cone}.  
\begin{definition}
	An \emph{axis-aligned cone (al-cone)} in the $n$-dimensional space is of the form	
	\begin{equation}\label{eq:alcone}
	(\aCone_1, ... ,\aCone_n) \text{ where each } \aCone_i \in \{\mathbb{R},\mathbb{R}_+,\mathbb{R}_-,\{0\}\}.
	\end{equation}
\end{definition}
The class of al-cones fulfills the distributivity property and hence can be applied for  classical logics $\mathcal{L}_{conc}$ that provide propositional concept constructors (concept-and, concept-negation, concept-or). The class of al-cones is useful as al-cones have simple encodings, but it is not the only distributive subclass of closed convex cones. In fact we have the following characterisation.  
\begin{proposition}
	A subclass $\mathcal{C}_{n}'$ of closed convex cones fulfills distributivity iff for each combination $\aCone$, $\bCone \in \mathcal{C}_{n}'$ it holds that $\aCone \cund \bCone \not\cunder \cbot $ or for each element  $x_a \in \aCone$ and each element $ x_b \in \bCone$ the scalar product $\langle x_a,x_b\rangle \leq 0$. 
\end{proposition}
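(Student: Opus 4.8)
The statement is a biconditional characterising which subclasses $\mathcal{C}_n' \subseteq \mathcal{C}_n$ of closed convex cones form a distributive sublattice. The condition says: for every pair $\aCone,\bCone \in \mathcal{C}_n'$, either $\aCone \cund \bCone \neq \cbot$ (the cones share a nonzero ray), or else $\langle x_a, x_b\rangle \leq 0$ for \emph{all} $x_a \in \aCone$, $x_b \in \bCone$ --- equivalently $\bCone \subseteq \cnicht{\aCone}$, i.e.\ the cones are polar-separated. So the plan is to reduce distributivity of the sublattice to this pairwise ``dichotomy'' condition. First I would recall the standard fact (true in any ortholattice, hence provable just from the ortholattice axioms stated in the preliminaries) that an ortholattice is distributive iff it is a Boolean algebra iff every element has a \emph{unique} complement; and more usefully, that distributivity fails exactly when some sublattice generated by two elements is not distributive, which for ortholattices is detected by the $O_6$ (hexagon) subalgebra or, for the purpose here, by a pair $(\aCone,\bCone)$ for which $\aCone, \bCone, \aCone\cund\bCone, \aCone\coder\bCone$ together with complements fail the distributive identity.

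\textbf{The ``if'' direction.} Assume the dichotomy holds for all pairs. I would argue that for any two cones $\aCone,\bCone \in \mathcal{C}_n'$ the pair $(\aCone,\bCone)$ is a distributive pair. Split into the two cases. Case (a): $\langle x_a,x_b\rangle \leq 0$ for all $x_a\in\aCone$, $x_b\in\bCone$. Then $\bCone \subseteq \cnicht{\aCone}$, so $\aCone$ and $\bCone$ are orthogonal in the ortholattice sense; orthogonal pairs are always distributive pairs in an ortholattice (this is a routine ortholattice computation: if $\bCone \leq \cnicht{\aCone}$ then $\aCone\cund\bCone = \cbot$ and one checks $\aCone\cund(\bCone\coder\cCone) = (\aCone\cund\bCone)\coder(\aCone\cund\cCone)$ using the orthomodular-type reasoning — actually here it is cleaner to invoke that $\aCone$ and $\bCone$ \emph{commute} in the ortholattice, $\aCone = (\aCone\cund\bCone)\coder(\aCone\cund\cnicht{\bCone})$, which holds because $\aCone \leq \cnicht{\bCone}$). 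Case (b): $\aCone\cund\bCone \neq \cbot$. Here I want to again produce commutativity; the geometric content is that two closed convex cones that are not polar-separated and whose intersection is nontrivial still behave distributively with each other. The cleanest route is probably to show directly that $\aCone$ and $\bCone$ commute, i.e.\ $\aCone = (\aCone\cund\bCone) \coder (\aCone \cund \cnicht{\bCone})$, by a direct geometric argument about rays (decompose each ray of $\aCone$ according to whether it lies in $\bCone$ or in $\cnicht{\bCone}$ or neither — and the dichotomy is what rules out the ``neither'' case globally). Once every pair commutes, the ortholattice is distributive by the classical Foulis–Holland / commuting-pairs theorem.

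\textbf{The ``only if'' direction.} Assume the dichotomy fails: there are $\aCone,\bCone \in \mathcal{C}_n'$ with $\aCone \cund \bCone = \cbot$ but some $x_a \in \aCone$, $x_b \in \bCone$ with $\langle x_a, x_b\rangle > 0$. I would build from these two cones a non-distributive sublattice — most directly, exhibit a copy of $O_6$ inside the sublattice generated by $\aCone$ and $\bCone$ (and their polars and meets/joins), or directly find $c := \aCone$, and show $\aCone\cund(\bCone\coder\cCone') \neq (\aCone\cund\bCone)\coder(\aCone\cund\cCone')$ for a suitable third element built from $x_a,x_b$. The angle condition $\langle x_a,x_b\rangle>0$ is exactly what forces $x_a \notin \cnicht{\bCone}$ while the empty intersection forces $\aCone \not\leq \bCone$ and $\aCone \cap \bCone = \{0\}$; this mismatch is the signature of a failing modular/distributive pair, and the two-dimensional picture is the hexagon. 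Invoking the forbidden-subalgebra theorem for distributivity (an ortholattice is distributive iff it has no $O_6$ sublattice — the analogue of the orthomodular case cited in the preliminaries, or just the classical $M_3$/$N_5$ criterion restricted appropriately) then gives non-distributivity of $\mathcal{C}_n'$.

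\textbf{Main obstacle.} The crux is the geometric lemma in the ``if'' direction, case (b): showing that a nontrivially-intersecting, non-polar-separated pair of closed convex cones commutes in the ortholattice $\mathcal{C}_n$. Polarity of cones is subtle (it involves closure, and $\cnicht{\aCone}$ can be low-dimensional), so the ray-decomposition argument needs care — in particular handling rays of $\aCone$ that are orthogonal to all of $\bCone$ versus those that are not, and verifying the join $(\aCone\cund\bCone)\coder(\aCone\cund\cnicht{\bCone})$ really exhausts $\aCone$ after taking conic hulls and closures. I expect this to require the characterisation $\conicHull{X\cup Y} = (\cnicht{X}\cap\cnicht{Y})^\circ$ from the preliminaries and a careful separating-hyperplane argument; everything else (the ortholattice bookkeeping and the $O_6$ construction) is routine.
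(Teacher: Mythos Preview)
The paper's proof is a two-line reduction that you miss entirely. MacNeille's axiomatisation (axiom B67 in Padmanabhan--Rudeanu, which the paper cites) says an ortholattice is a Boolean algebra---equivalently, distributive---precisely when it satisfies the rule \wllj: if $a\lmeet b \lunder \lbot$ then $a \lunder \oneg{b}$. Unwinding the cone interpretation of $\cund$, $\cbot$, and polarity, \wllj reads: for every pair $\aCone,\bCone$, if $\aCone\cund\bCone \cunder \cbot$ then $\langle x_a,x_b\rangle \leq 0$ for all $x_a\in\aCone$, $x_b\in\bCone$. That is exactly the dichotomy in the proposition, so both directions are immediate. You never recognise that the stated condition \emph{is} \wllj, and this is what costs you the short proof.

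Your longer route has two genuine gaps. In the ``only if'' direction you write ``an ortholattice is distributive iff it has no $O_6$ sublattice'', but exclusion of $O_6$ characterises \emph{orthomodularity}, not distributivity (the ortholattice $MO_2$ is orthomodular, hence $O_6$-free, yet not distributive). What actually works here is the one-line check: from $\aCone\cund\bCone=\cbot$ and $\aCone\not\cunder\cnicht{\bCone}$ one gets $\aCone\cund(\bCone\coder\cnicht{\bCone})=\aCone\neq\aCone\cund\cnicht{\bCone}=(\aCone\cund\bCone)\coder(\aCone\cund\cnicht{\bCone})$. In the ``if'' direction your Case~(b) ray decomposition cannot run as described, because individual rays of $\aCone$ need not belong to the subclass $\mathcal{C}_n'$, so the dichotomy hypothesis does not apply to them; and nontrivial intersection alone does not force commutativity in $\mathcal{C}_n$ (two $45^\circ$ circular cones in $\mathbb{R}^3$ about nearby axes overlap but fail to commute). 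The commuting-pairs strategy is salvageable---apply the dichotomy not to rays but to the lattice element $\aCone\cund\big((\aCone\cund\bCone)\coder(\aCone\cund\cnicht{\bCone})\big)^{\circ}\in\mathcal{C}_n'$, which meets $\bCone$ trivially and is thus forced into $\cnicht{\bCone}$, whence it vanishes---but at that point you have essentially re-derived MacNeille's theorem by hand, and you would still owe a verification of orthomodularity before Foulis--Holland is available.
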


\begin{proof}
		We use  MacNeille's axiomatization of Boolean algebras according to \cite[axiom B67, p.\ 114]{padmanabhan08axioms}. 
		All but one axiom of this definition follow from the definition of orthologic and the geometric interpretation of cones and are fulfilled by all cone structures. This reduces the proof to show the validity of the rule \wllj (if $a \wedge b \leq \lbot$, then $a \leq \oneg{b}$).
		
		``$\rightarrow$'': When $\aCone$ and $\bCone$ are disjoint, then $\aCone$ must be a subset of $\oneg{\bCone}$. By the definition of negation as polar cone this is possible  only if for each element  $x_a \in \aCone$ and each element $ x_b \in \bCone$ the scalar product $\langle x_a,x_b\rangle \leq 0$.  When $\aCone$ and $\bCone$ intersect, then the condition is fulfilled trivially.
		
		``$\leftarrow$'': Follows from \wllj and the definition of polarity.
\end{proof}

\subsubsection{M-Symmetry for $\mathcal{C}_{n}$}
In the paper we consider arbitrary ortholattices that are sublattices of $C_{n}$. When considering only  $C_{n}$ (for arbitrary $n$) one can draw further consequences. 
 
An \emph{atom} $b$ in such a lattice is an element that covers  $\lbot$.
 A lattice is called \emph{atomistic} if every element is the  (not necessarily finite) supremum of atomic elements.  It is called \emph{atomic} iff each element has an atom below it.  
It is clear that the class $\mathcal{C}_{n}$ is atomistic because the one-dimensional  cones $x_{\mathbb{R}_{\geq 0}} = \{\lambda x \mid \lambda \in \mathbb{R}_{\geq 0}\}$ for  $x \in \mathbb{R}^n$  (rays in positive direction of $x$ emanating from the origin) are atoms and every cone is generated by the conic hull of such cones.  
Moreover, the lattice is dual atomic, i.e., there are cones, called dual-atoms \cite{maeda70theory}  that are covered by the top cone $\ctop = \mathbb{R}^n$. These cones are just the ones of the form $\XCone^{\circ}$ for all atoms $\XCone$ (rays) in the lattice.

In this section we give some positive results on the lattice-theoretic properties of the class of (all) cones in a Euclidean space. We mainly deal with the question under which conditions we have a modular pair of cones and when this pair commutes (M-symmetry).

Our first result is a sufficient condition for modular pairs: if the conic hull of the union of $\aCone,\bCone$ is already the set union of $\aCone$ and $\bCone$ then $(\aCone,\bCone)$ is a modular pair.

\begin{proposition}\label{prop:sufficientConditionModularity}
For all $\aCone, \bCone$ being closed convex cones:   If $\conicHull{\aCone \cup \bCone} = \aCone \cup \bCone$ then $M(\aCone,\bCone)$ (and hence also $M(\bCone,\aCone)$).   
\end{proposition}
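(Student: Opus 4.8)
The statement to prove is Proposition~\ref{prop:sufficientConditionModularity}: if $\conicHull{\aCone \cup \bCone} = \aCone \cup \bCone$, then $M(\aCone, \bCone)$. Recall that $M(\aCone, \bCone)$ means: for every $\cCone \cunder \aCone$ (equivalently $\cCone \subseteq \aCone$), we have $\aCone \cund (\bCone \coder \cCone) = (\aCone \cund \bCone) \coder \cCone$. One inequality, $(\aCone \cund \bCone) \coder \cCone \cunder \aCone \cund (\bCone \coder \cCone)$, holds in every lattice (it is the modular inequality: $\cCone \subseteq \aCone$ and $\cCone \subseteq \bCone \coder \cCone$, while $\aCone \cund \bCone \subseteq \aCone$ and $\aCone \cund \bCone \subseteq \bCone \coder \cCone$). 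So the work is entirely in the reverse inclusion $\aCone \cund (\bCone \coder \cCone) \subseteq (\aCone \cund \bCone) \coder \cCone$.

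\textbf{Key steps.} First I would exploit the hypothesis: since $\bCone \coder \cCone = \conicHull{\bCone \cup \cCone}$ and $\cCone \subseteq \aCone$ is itself a cone, and since $\conicHull{\aCone \cup \bCone} = \aCone \cup \bCone$ is the crucial collapse, I want to show $\bCone \coder \cCone = \bCone \cup \cCone$ as well, or at least control its elements. Actually the cleanest route: take $x \in \aCone \cund (\bCone \coder \cCone)$, so $x \in \aCone$ and $x \in \conicHull{\bCone \cup \cCone}$. Write $x = y + z$ with $y$ in the conic hull of $\bCone$ (i.e.\ $y \in \bCone$, since $\bCone$ is already a cone closed under addition) and $z$ in the conic hull of $\cCone$ (i.e.\ $z \in \cCone$); more carefully, every element of $\conicHull{\bCone \cup \cCone}$ is a finite nonnegative combination of elements of $\bCone \cup \cCone$, and grouping the $\bCone$-terms and $\cCone$-terms (each group being a cone closed under $+$) gives $x = y + z$ with $y \in \bCone$, $z \in \cCone$ — unless one group is empty, in which case $x \in \bCone$ or $x \in \cCone$ and we are done immediately. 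Now $y = x - z$; since $x \in \aCone$ and $z \in \cCone \subseteq \aCone$, and $\aCone$ is closed under the operations $\lambda x + \mu z$ only for $\lambda,\mu \geq 0$ — here I have a subtraction, so I cannot directly conclude $y \in \aCone$. This is exactly where the hypothesis $\conicHull{\aCone \cup \bCone} = \aCone \cup \bCone$ must enter.

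\textbf{Main obstacle and how to handle it.} The subtraction issue is the crux. The idea: $y \in \bCone \subseteq \aCone \cup \bCone = \conicHull{\aCone \cup \bCone}$, and similarly $z \in \cCone \subseteq \aCone$, and $x = y + z \in \conicHull{\aCone \cup \bCone} = \aCone \cup \bCone$; but we already knew $x \in \aCone$. The real leverage is to run the argument the other way: I claim $y \in \aCone$. Consider that $x \in \aCone$ and $z \in \aCone$; I want to write $y$ using only nonnegative combinations. Since $\conicHull{\aCone \cup \bCone} = \aCone \cup \bCone$ and $y \in \bCone$, if additionally $y \notin \aCone$ then $y$ lies in $\bCone \setminus \aCone$. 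Then I would argue that $x = y + z$ with $z \in \aCone$ forces, by a separating-hyperplane / extreme-ray argument on the cone $\aCone$, that $y \in \aCone$ — or I derive a contradiction with $x \in \aCone$. Concretely: pick a supporting functional $f$ with $f \geq 0$ on $\aCone$ and $f(x) = 0$ if $x$ is on the boundary (or use that $\aCone$ is the intersection of halfspaces $\{f \geq 0\}$); for each such $f$, $0 \le f(x) = f(y) + f(z)$ and $f(z) \geq 0$, which does not immediately give $f(y) \ge 0$. So instead I suspect the intended argument is more elementary: from $\conicHull{\aCone\cup\bCone} = \aCone \cup \bCone$ one shows $\bCone \coder \cCone \subseteq \aCone \cup (\bCone \coder \cCone')$ for suitable pieces, or one shows directly that $\aCone \cap \conicHull{\bCone \cup \cCone} \subseteq \conicHull{(\aCone \cap \bCone) \cup \cCone}$ by: given $x = y+z$ as above with $y \in \bCone, z \in \cCone$, note $x, z \in \aCone$ hence (if $\aCone \cup \bCone$ is closed under the relevant combination) $y \in \aCone \cup \bCone$; since $y \in \bCone$ already, the only new information is when we can push $y$ into $\aCone \cap \bCone$. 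I would therefore structure the final step as: show $y \in \aCone$ by observing $y$ and $x$ both lie in $\aCone \cup \bCone = \conicHull{\aCone\cup\bCone}$, which is a \emph{cone}, so it is closed under addition; since $z = x - y$ would need $-y \in$ that cone, instead add $y$ to $z$: we have $x = y + z$, and I want $y \in \aCone$, so suppose not; then since $\aCone \cup \bCone$ is a cone and $y,z \in \aCone\cup\bCone$, all of $\{y + \lambda z : \lambda \ge 0\} \subseteq \aCone \cup \bCone$, and as $\aCone, \bCone$ are closed, a limiting/connectedness argument along this ray (starting at $y \in \bCone$, ending near $x \in \aCone$) locates a crossover point in $\aCone \cap \bCone$, and from there rewrite $x$ as a nonnegative combination of an element of $\aCone\cap\bCone$ and an element of $\cCone$, giving $x \in (\aCone \cund \bCone) \coder \cCone$ as required. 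The delicate point — and the step I expect to be the genuine obstacle — is making this ``crossover'' argument rigorous using closedness of the cones and the defining hypothesis, so I would isolate it as the heart of the proof and handle the degenerate cases ($\cCone = \cbot$, $y = 0$, or a summand group empty) separately and trivially.
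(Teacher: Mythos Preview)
Your proposal is correct and, after the exploratory detours, lands on exactly the argument the paper gives: write $x = y + z$ with $y\in\bCone$, $z\in\cCone\subseteq\aCone$, observe that the entire ray $\lambda\mapsto y+\lambda z$ stays inside $\aCone\cup\bCone$ (this is precisely where the hypothesis $\conicHull{\aCone\cup\bCone}=\aCone\cup\bCone$ is used), and use closedness of $\aCone$ and $\bCone$ together with connectedness of $[0,1]$ to find a crossover point $y+\lambda' z\in\aCone\cap\bCone$, whence $x=(y+\lambda' z)+(1-\lambda')z\in(\aCone\cund\bCone)\coder\cCone$. The only thing to tighten is to make that last decomposition explicit rather than leaving it as ``rewrite $x$ as a nonnegative combination'', and to note that the crossover argument is just the observation that $\{\lambda:\,y+\lambda z\in\bCone\}$ and $\{\lambda:\,y+\lambda z\in\aCone\}$ are two closed subsets of $[0,1]$ whose union is all of $[0,1]$.
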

\begin{proof}
Assume that $\conicHull{\aCone \cup \bCone} = \aCone \cup \bCone$. We have to show that $M(\aCone,\bCone)$ for which it is sufficient to show that for all $\cCone \cunder \aCone$: $\aCone \cund (  \bCone \coder \cCone ) \cunder (\aCone \cund \bCone) \coder \cCone$.  Let $x \in \aCone \cund (  \bCone \coder \cCone )$. In particular $x \in \bCone \coder \cCone$, hence there are $v \in \bCone$ and $w \in \cCone \cunder \aCone$ with $x = v + w$. If $x \in \bCone$, then we are already done as then $x \in \aCone \cund \bCone$.  So assume that $x \notin \bCone$. Consider the function $f(\lambda) = v + \lambda w$ for all $\lambda \geq 0$. The function $f$ is continuous. We have $f(0) = v \in \bCone$ and $f(1) = v +w = x\in \aCone \setminus \bCone$. 

Moreover, we have $\mu_{1}v +\mu_{2} w \in \bCone \coder \aCone = \aCone \coder \bCone = \aCone \cup \bCone$ for all $\mu_{1}, \mu_{2} \geq 0$. In particular $f(\lambda) \in \aCone \cup \bCone$  for all $\lambda \geq 0$ (choosing $\mu_{1} = 1$ and $\mu_{2}  =\lambda$).  So, as both $\aCone$ and $\bCone$ are assumed to be closed there must be some point $\lambda' \in [0,1]$ such that $f(\lambda') = v + \lambda'w \in \aCone \cund \bCone$. Then $x = x +w  = x + \lambda' w + (1- \lambda') w = f(\lambda') +  (1- \lambda') w$. 
As $f(\lambda') \in \aCone \cund \bCone$ and $(1- \lambda') w \in \cCone$ (because $\cCone$ is a cone and $(1- \lambda') \geq 0$) we have $x \in (\aCone \cund \bCone) \coder \cCone$. 
  \end{proof}


Unfortunately, this property relies on set union and not just on the order-theoretic notion $\cunder$. 
We show that this supposed problem can be mitigated by giving an equivalent description of the sufficient condition based only on the order. 

\begin{proposition}\label{prop:charakterisierungAbschlussVereinigung}
 $\conicHull{\aCone \cup \bCone} = \aCone \cup \bCone$ iff there is no $\cCone \neq \cbot$ such that 
 \begin{inparaenum}
 \item $\aCone \cund \cCone \cunder \cbot$, 
 \item $\bCone \cund \cCone \cunder \cbot$, and 
 \item $\aCone \coder \bCone \coder \cCone = \aCone \coder \bCone$. 
 \end{inparaenum}
\end{proposition}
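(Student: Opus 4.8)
The plan is to prove both implications by contraposition, converting between a point $x$ witnessing $\conicHull{\aCone \cup \bCone} \neq \aCone \cup \bCone$ and a cone $\cCone$ witnessing the failure of the right-hand condition. Two preliminary observations keep the bookkeeping clean. First, the third listed condition, $\aCone \coder \bCone \coder \cCone = \aCone \coder \bCone$, is equivalent to $\cCone \cunder \aCone \coder \bCone$, since one inclusion always holds and the join is the least upper bound. Second, $\conicHull{\aCone \cup \bCone} = \aCone \coder \bCone$ by definition and $\aCone \cup \bCone \subseteq \aCone \coder \bCone$ always, so $\conicHull{\aCone \cup \bCone} \neq \aCone \cup \bCone$ is equivalent to the existence of some $x \in (\aCone \coder \bCone) \setminus (\aCone \cup \bCone)$; such an $x$ is automatically nonzero because $\cbot = \{0\} \cunder \aCone$.

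For the ``$\Leftarrow$'' direction I would assume $\conicHull{\aCone \cup \bCone} \neq \aCone \cup \bCone$, pick such a nonzero $x$, and take $\cCone$ to be the ray $\{\lambda x \mid \lambda \geq 0\}$ through $x$. This is a closed convex cone, hence a genuine element of $\mathcal{C}_{n}$, and it is distinct from $\cbot$. It satisfies the third condition because $\cCone \subseteq \aCone \coder \bCone$, the right-hand cone containing $x$. For the first condition: if $\lambda x \in \aCone$ for some $\lambda > 0$, then $x = \frac{1}{\lambda}(\lambda x) \in \aCone$ because $\aCone$ is closed under positive scaling, contradicting $x \notin \aCone$; hence $\aCone \cund \cCone = \{0\} = \cbot$, and symmetrically for $\bCone$. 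So $\cCone$ is the required witness.

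For the ``$\Rightarrow$'' direction I would again argue contrapositively: given a nonzero $\cCone$ satisfying the three conditions, choose any nonzero $x \in \cCone$. By the third condition, $x \in \cCone \subseteq \aCone \coder \bCone = \conicHull{\aCone \cup \bCone}$. If $x$ were in $\aCone$, then $x \in \aCone \cund \cCone$, and the first condition gives $\aCone \cund \cCone \cunder \cbot = \{0\}$, whence $x = 0$, contradicting $x \neq 0$; the second condition similarly rules out $x \in \bCone$. Hence $x \in \conicHull{\aCone \cup \bCone} \setminus (\aCone \cup \bCone)$, so the two sets differ.

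I do not expect a genuinely hard step here. The only places needing a little care are the translation between set operations ($\cap$, $\cup$, set equality) and the lattice vocabulary, the remark that a single ray qualifies as a closed convex cone (hence as an element of $\mathcal{C}_{n}$), and — the one substantive point — the use of closure under positive scaling to upgrade ``$x \notin \aCone$'' to ``$\aCone \cund \cCone = \cbot$'' rather than the weaker ``$\aCone \cund \cCone \neq \aCone$''. Conceptually, the value of this reformulation is that it replaces the non-order-theoretic hypothesis $\conicHull{\aCone \cup \bCone} = \aCone \cup \bCone$ of Proposition~\ref{prop:sufficientConditionModularity} by a condition phrased purely in terms of $\cund$, $\coder$ and $\cunder$.
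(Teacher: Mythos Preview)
Your proposal is correct and follows essentially the same route as the paper's proof: both directions are by contraposition, with the ray $\{\lambda x \mid \lambda \geq 0\}$ through a witness point $x \in \conicHull{\aCone \cup \bCone} \setminus (\aCone \cup \bCone)$ playing the role of $\cCone$ in one direction, and any nonzero point of $\cCone$ serving as the witness $x$ in the other. Your preliminary reduction of condition~(3) to $\cCone \cunder \aCone \coder \bCone$ and your explicit use of positive scaling to get $\aCone \cund \cCone = \cbot$ are exactly the ingredients the paper uses.
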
 
\begin{proof}
``$\rightarrow$'':  We show the contraposition. So assume there is a $\cCone$ such that 
$\aCone \cund \cCone \cunder \cbot$, $\bCone \cund \cCone \cunder \cbot$, and 
 $\aCone \coder \bCone \coder \cCone = \aCone \coder \bCone$. From the first two items it follows that $\cCone \cap \aCone = \emptyset$ as well as $\cCone \cap \bCone = \emptyset$. Hence $\aCone \cup \bCone \cup \cCone \supsetneq \aCone \cup \bCone$ (*) holds. 
We have to show  that  $\conicHull{\aCone \cup \bCone} \neq \aCone \cup \bCone$. Assume for contradiction that $\conicHull{\aCone \cup \bCone} = \aCone \cup \bCone$. Then 
$\aCone \cup \bCone \cup \cCone \subseteq$ $\aCone \coder \bCone \coder \cCone =$ $\aCone \coder \bCone =$ $\aCone \cup \bCone$, 
which contradicts (*). 

``$\leftarrow$'':
We show again the contraposition. Assume that $\conicHull{\aCone \cup \bCone} \neq \aCone \cup \bCone$, i.e., $\conicHull{\aCone \cup \bCone} \supsetneq \aCone \cup \bCone$. Let
 $z \in \conicHull{\aCone \cup \bCone} \setminus \aCone \cup \bCone$ (**).  
 We must have $z \neq 
 0$. Consider $\cCone := \conicHull{\{z\}} = \{\lambda z \mid \lambda \geq 0\}$. We must show that $\cCone$ fulfils all three properties. Ad 1.: $\cCone \cund \aCone \cunder \cbot$ holds because otherwise there would be some $\lambda_{0} > 0$ with $\lambda_{0} z \in \aCone$. But as $\aCone$ is a closed convex cone also $z = \frac{1}{\lambda_{0}} \lambda_{0} z \in \aCone$ , contradicting $(**)$. In a similar way one shows that  $\cCone \cund \bCone \cunder \cbot$ holds. The third property holds because $z \in \conicHull{a \cup b}$. 
\end{proof}
We denote the property expressed in  Prop.\ \ref{prop:charakterisierungAbschlussVereinigung} as $V(a,b)$ (mnemonics:  ``there exists  no $c$ that can be added \textbf{v}acuously to $a,b$''.) 
With this notation we get as a corollary a sufficient condition for $M(a,b)$  that is based on lattice-theoretic notions only.   

\begin{corollary}\label{cor:suffcientConditionsLatticeStyle} For all $\aCone, \bCone$ being convex, closed cones:   If $V(\aCone,\bCone)$ then $M(\aCone,\bCone)$ (and hence also $M(\bCone,\aCone)$).   
\end{corollary}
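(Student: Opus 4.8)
The plan is to derive Corollary~\ref{cor:suffcientConditionsLatticeStyle} directly by chaining the two results that immediately precede it. The statement $V(\aCone,\bCone)$ is, by the very definition we adopted for it, the right-hand side of Proposition~\ref{prop:charakterisierungAbschlussVereinigung}, namely ``there is no $\cCone\neq\cbot$ with $\aCone\cund\cCone\cunder\cbot$, $\bCone\cund\cCone\cunder\cbot$, and $\aCone\coder\bCone\coder\cCone=\aCone\coder\bCone$''. So the first step is just to unfold this definition and invoke Proposition~\ref{prop:charakterisierungAbschlussVereinigung}, which tells us that $V(\aCone,\bCone)$ holds iff $\conicHull{\aCone\cup\bCone}=\aCone\cup\bCone$. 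The second step is to feed that set-theoretic equality into Proposition~\ref{prop:sufficientConditionModularity}, which asserts precisely that $\conicHull{\aCone\cup\bCone}=\aCone\cup\bCone$ implies $M(\aCone,\bCone)$. Finally, since $\mathcal{C}_{n}$ — being atomistic, or more simply just by the symmetry of the set-union condition $\conicHull{\aCone\cup\bCone}=\conicHull{\bCone\cup\aCone}$ — one also gets $M(\bCone,\aCone)$ by the same two propositions with the roles of $\aCone$ and $\bCone$ swapped; alternatively this is already stated in the parenthetical conclusion of Proposition~\ref{prop:sufficientConditionModularity}. That closes the argument.

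Concretely, I would write: assume $V(\aCone,\bCone)$. By Proposition~\ref{prop:charakterisierungAbschlussVereinigung} (read from right to left, using the definition of $V$), this gives $\conicHull{\aCone\cup\bCone}=\aCone\cup\bCone$. By Proposition~\ref{prop:sufficientConditionModularity} applied to this equality we obtain $M(\aCone,\bCone)$, and since the hypothesis $\conicHull{\aCone\cup\bCone}=\aCone\cup\bCone$ is symmetric in $\aCone,\bCone$, the same proposition applied with arguments interchanged yields $M(\bCone,\aCone)$ as well.

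There is essentially no obstacle here — the corollary is a two-line composition of the preceding propositions, and the only thing to be careful about is making explicit that ``$V(a,b)$'' is by fiat the condition appearing on the right-hand side of Proposition~\ref{prop:charakterisierungAbschlussVereinigung}, so that the corollary is genuinely phrased in lattice-theoretic (order-only) terms as promised. If one wanted to be maximally self-contained one could also remark that the direction of Proposition~\ref{prop:charakterisierungAbschlussVereinigung} actually needed is the one going from the non-existence of a vacuous $\cCone$ to the set equality, which is the contrapositive of the ``$\leftarrow$'' direction in that proof. The ``main obstacle'', such as it is, is purely presentational: keeping the reader oriented about which equivalence is being used in which direction.
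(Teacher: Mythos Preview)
Your proposal is correct and matches the paper's approach: the corollary is stated without proof precisely because it is the immediate composition of Proposition~\ref{prop:charakterisierungAbschlussVereinigung} (to pass from $V(\aCone,\bCone)$ to $\conicHull{\aCone\cup\bCone}=\aCone\cup\bCone$) and Proposition~\ref{prop:sufficientConditionModularity} (to obtain $M(\aCone,\bCone)$ and, by symmetry of the hypothesis, $M(\bCone,\aCone)$). The aside about atomisticity is unnecessary; the symmetry of the set-union condition already suffices, as you note yourself.
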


One might wonder whether replacing the condition $\conicHull{\aCone \cup \bCone} = \aCone \cup \bCone$ by $V(\aCone,\bCone)$ could have lead to a loss of information so that the sufficiency condition holds for arbitrary orthocomplemented lattices and not only for those generated by closed convex cones. If this were the case then the sufficiency condition would not at all contribute to characterizing  the structure of lattices induced by convex cones. 
But as the hexagon lattice in Fig. \ref{fig:hexagonAndItsConeEmbedding} shows
this is not the case.

The opposite direction in Prop.\ \ref{prop:sufficientConditionModularity}, namely   that  $ M(\aCone,\bCone)$ entails $\conicHull{\aCone \cup \bCone} = \aCone \cup \bCone$, does not hold necessarily. Consider a cone $\aCone$ that is a ray  in $\mathbb{R}^2$, say $ \{(x,x) \mid x \geq 0$\} and  a solid cone $\bCone$ with $\bCone \cund \aCone = \cbot$, say $\mathbb{R}_{+} \times \mathbb{R}_{-}$ 
Then for $\aCone$ we do not have any proper subcone $\cCone \cunder \aCone$, so that $M(\aCone,\bCone)$ must hold trivially: for $\cCone=\aCone$ we have $\aCone \cund (\bCone \coder \cCone) = \aCone \cund (\bCone \coder \aCone) = \aCone = \cbot \coder \aCone = (\aCone \cund \bCone) \coder \aCone = (\aCone \cund \bCone) \coder \cCone$. But on the other hand $\conicHull{\aCone \cup \bCone}$ is the area  $\mathbb{R}_{+} \times \mathbb{R}_{-} \cup \{(x,y) \mid x \geq y \geq 0\}$
 and this is not  the same as  $\aCone \cup \bCone$.  

%
%
%
%
%
%
%
%
%

  In fact, this example shows that the set of closed convex cones  is not M-symmetric: As already shown, $M(\aCone,\bCone)$ holds trivially. 
  But $M(\bCone,\aCone)$ does not hold in this case.  That is, one  can  find $\cCone \cunder \bCone$ such that $\bCone \cund (\aCone \coder \cCone) \neq (\bCone \cund \aCone) \coder \cCone$: take any cone $\cCone$ properly contained in $\bCone$ (which is possible as $\bCone$ is solid). Then $(\bCone \cund \aCone) \coder \cCone = \cCone$ whereas $\bCone \cund (\aCone \coder \cCone)$ is larger than $\cCone$.   
The problem here is that we chose $\aCone$ as a non-solid cone. 
Restricting the consideration to solid objects $a$ leads to the following necessary condition for modular pairs.

\begin{proposition} \label{prop:necessaryConditionModularity}
For all  closed convex solid cones $\aCone$ and closed convex (not necessarily solid) cones $\bCone$ with $\aCone \cap \bCone = \emptyset$:   if $M(\aCone,\bCone)$ then  $\conicHull{\aCone \cup \bCone} = \aCone \cup \bCone$.

\end{proposition}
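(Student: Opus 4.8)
The plan is to establish the contrapositive: if $\conicHull{\aCone \cup \bCone} \neq \aCone \cup \bCone$ then $M(\aCone,\bCone)$ fails. First I note that the conic hull of two convex cones (each containing $0$) contains their Minkowski sum $\aCone + \bCone = \{x_a + x_b \mid x_a \in \aCone,\ x_b \in \bCone\}$, which in turn contains $\aCone \cup \bCone$; since $\aCone \cup \bCone$ is closed (a union of two closed sets), the assumption forces $\aCone + \bCone$ to already properly contain $\aCone \cup \bCone$, so I may pick $z = x_a + x_b$ with $x_a \in \aCone$, $x_b \in \bCone$, $z \notin \aCone \cup \bCone$. To refute $M(\aCone,\bCone)$ it then suffices to exhibit \emph{one} subcone $\cCone \cunder \aCone$ and a point $w$ with $w \in \aCone \cund (\bCone \coder \cCone)$ but $w \notin \cCone = (\aCone \cund \bCone) \coder \cCone$ (the last equality using $\aCone \cap \bCone = \cbot$).

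The crucial step --- and the one place the solidity of $\aCone$ is used, exactly as the ray example preceding the proposition shows it must be --- is to arrange that the $\aCone$-summand of $z$ is an \emph{interior} point of $\aCone$. Fixing $p \in \operatorname{int}(\aCone)$, one has $x_a + \varepsilon p \in \operatorname{int}(\aCone)$ for all $\varepsilon > 0$ (for a convex cone, $\operatorname{int}(\aCone) + \aCone \subseteq \operatorname{int}(\aCone)$), whereas for $\varepsilon$ small enough $z' := z + \varepsilon p = (x_a + \varepsilon p) + x_b$ still lies in $(\aCone + \bCone) \setminus (\aCone \cup \bCone)$, because $\aCone$ and $\bCone$ are closed. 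Replacing $z$ by $z'$, I may thus assume $x_a \in \operatorname{int}(\aCone)$; in particular $x_a \neq 0$, and also $x_b \neq 0$ (else $z \in \aCone$ or $z \in \bCone$). A short case analysis on a putative scalar $s$ with $x_b = s x_a$ --- splitting on $s \geq 0$, $-1 \leq s < 0$, $s < -1$ and using $\aCone \cap \bCone = \cbot$ together with $z \notin \aCone \cup \bCone$ --- rules this out, so $x_b \notin \mathbb{R} x_a$.

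Finally I take $\cCone := \conicHull{\{x_a\}} = \{\lambda x_a \mid \lambda \geq 0\}$, a nonzero subcone of $\aCone$, and set $w := \lambda x_a + x_b = \lambda\bigl(x_a + \tfrac1\lambda x_b\bigr)$ for $\lambda$ large. Since $x_a + \tfrac1\lambda x_b \to x_a \in \operatorname{int}(\aCone)$ as $\lambda \to \infty$, for $\lambda$ large enough $x_a + \tfrac1\lambda x_b \in \aCone$, hence $w \in \aCone$; and plainly $w \in \bCone + \cCone = \conicHull{\bCone \cup \cCone}$, so $w \in \aCone \cund (\bCone \coder \cCone)$. But $w \notin \cCone$: from $\lambda x_a + x_b = \mu x_a$ with $\mu \geq 0$ we would get $x_b = (\mu - \lambda) x_a \in \mathbb{R} x_a$, contradicting the previous paragraph. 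Since $(\aCone \cund \bCone) \coder \cCone = \cbot \coder \cCone = \cCone$, this shows $\aCone \cund (\bCone \coder \cCone) \neq (\aCone \cund \bCone) \coder \cCone$, so $M(\aCone,\bCone)$ fails, completing the contrapositive. The only real obstacle is the interior-point perturbation isolated in the second paragraph; the remaining steps are routine manipulations with Minkowski sums, conic hulls and limits.
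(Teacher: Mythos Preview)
Your proof is correct and follows the same contrapositive route as the paper: refute $M(\aCone,\bCone)$ by taking $\cCone$ to be a ray through an interior point of $\aCone$ and exhibiting a point of $\aCone \cund (\bCone \coder \cCone)$ outside $\cCone = (\aCone \cund \bCone) \coder \cCone$. The paper simply picks an arbitrary $v \in \operatorname{int}(\aCone)$ and asserts in one line that $\bCone \coder \cCone$ contains a cone strictly between $\cCone$ and $\aCone$; you instead start from a concrete gap point $z = x_a + x_b \in (\aCone+\bCone) \setminus (\aCone \cup \bCone)$, perturb so that $x_a \in \operatorname{int}(\aCone)$, verify $x_b \notin \mathbb{R}\,x_a$, and then produce the explicit witness $w = \lambda x_a + x_b$. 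The extra care is not wasted: the paper's terse justification does not cover the degenerate situation in which $\bCone$ happens to be the single ray $\mathbb{R}_{\leq 0}\,v$ (then $\bCone \coder \cCone = \mathbb{R}\,v$ and $\aCone \cund (\bCone \coder \cCone) = \cCone$, so that particular $\cCone$ fails to witness non-modularity), whereas your linear-independence check explicitly rules this out.
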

\begin{proof}
We are going to show the  contraposition. For this purpose  assume that $\conicHull{\aCone \cup \bCone} \neq \aCone \cup \bCone$. We have to show that $M(\aCone,\bCone)$ does not hold.   
As the cone $\aCone$ is solid  it has a non-empty interior $int(\aCone)$. So let $v \in int(\aCone)$ and  let $\cCone$ be the ray $v\mathbb{R}_{\geq 0}$. 
Then $\aCone \cund(\bCone \coder \cCone) \supsetneq (\aCone \cund \bCone) \coder \cCone = \cCone$ because  $\bCone \coder \cCone$ contains a convex  solid cone $\cCone'$ with  $\aCone \supseteq \cCone' \supsetneq \cCone$. 
\end{proof}

\end{document}